\def\RR{\mathbb{R}}
\def\EE{\mathbb{E}}
\def\OO{\mathcal{O}}
\def\BB{\mathcal{B}}
\def\FF{\mathcal{F}}
\theoremstyle{plain}
\newtheorem{theorem}{Theorem}[section]
\newtheorem{proposition}[theorem]{Proposition}
\newtheorem{lemma}[theorem]{Lemma}
\theoremstyle{definition}
\newtheorem{assumption}[theorem]{Assumption}
\theoremstyle{remark}
\icmltitlerunning{A Trainable Optimizer}
\begin{document}

\twocolumn[
  \icmltitle{A Trainable Optimizer}



  \icmlsetsymbol{equal}{*}

  \begin{icmlauthorlist}
    \icmlauthor{Ruiqi Wang}{1}
    \icmlauthor{Diego Klabjan}{1}
  \end{icmlauthorlist}

  \icmlaffiliation{1}{Industrial Engineering and Management Sciences, Northwestern University, Evanston, IL, USA, 60201}

  \icmlcorrespondingauthor{Ruiqi Wang}{RuiqiWang2025@u.northwestern.edu}

  \vskip 0.3in
]



\printAffiliationsAndNotice{}  

\begin{abstract}
  The concept of learning to optimize involves utilizing a trainable optimization strategy rather than relying on manually defined full gradient estimations such as ADAM. We present a framework that jointly trains the full gradient estimator and the trainable weights of the model. Specifically, we prove that pseudo-linear TO (Trainable Optimizer), a linear approximation of the full gradient, matches SGD’s convergence rate while effectively reducing variance. Pseudo-linear TO incurs negligible computational overhead, requiring only minimal additional tensor multiplications. To further improve computational efficiency, we introduce two simplified variants of Pseudo-linear TO. Experiments demonstrate that TO methods converge faster than benchmark algorithms (e.g., ADAM) in both strongly convex and non-convex settings, and fine tuning of an LLM. 
\end{abstract}

\section{Introduction}

 Gradient-based stochastic optimization methods are fundamental to solving many machine learning problems. Given a set of loss functions $\{f_n(w)\}_{n=1}^N$ where $w \in \mathbb{R}^d$, the full loss is defined as $F(w)=\frac{1}{N} \sum_{i=1}^N f_n(w)$. For a subset $\BB\subset\{1,\ldots,N\}$, the mini-batch loss is given by $F^{\BB}(w) := \frac{1}{|\BB|}\sum_{n\in\BB} f_n(w)$. Let $w^*$ be the global minimum of $F(w)$. Popular approaches, such as Stochastic Gradient Descent (SGD), RMSProp \citep{hinton2012neural}, Adagrad \citep{duchi2011adaptive}, and ADAM \citep{kingma2014adam}, can be unified under a general framework. In such a framework, the update direction depends on past iterations $w_1,\ldots, w_t$, past stochastic gradients $g_1,\ldots, g_t$ and the optimizer's variables $\theta_t$, following the update rule
\begin{eqnarray}
    w_{t+1} = w_t - \gamma_t \widehat G_t,\nonumber
\end{eqnarray}
where $\gamma_t$ is the learning rate and $\widehat G_t$ denotes the update direction.

We list the examples of functions $\widehat G_t$ in Table~\ref{tab:egofG}.
\begin{table}[H]
    \centering
    \begin{tabular}{c|c}
      \hline Algorithm   & $\widehat G_t$ \\ \hline
       SGD  & $g_t$\\
       Momentum & $(1-\beta_1)\sum_{k=1}^t \beta_1^{t-k} g_k$ \\ 
       Adagrad & $\frac{g_t}{\sqrt{\sum_{k=1}^t g_k \odot g_k / t + \epsilon}}$\\
       RMSProp & $\frac{g_t}{\sqrt{(1-\beta)\sum_{k=1}^t \beta ^{t-k} g_k \odot g_k +\epsilon}}$\\
       ADAM & $\frac{(1-\beta_1)\sum_{j=1}^t \beta_1^{t-j} g_j }{\sqrt{(1-\beta_2)\sum_{k=1}^t \beta_2 ^{t-k} g_k\odot g_k +\epsilon}}$\\ \hline
    \end{tabular}
    \caption{Examples of $\widehat G_t$}
    \label{tab:egofG}
\end{table}

Learning to Optimize (L2O) automates the development of optimization strategies by replacing traditional, manually designed $\widehat G_t$ functions with learned counterparts \citep{chen2022learning}. The core premise of L2O is to employ machine learning models, typically neural networks, to discover task-specific optimization policies that outperform conventional handcrafted rules. Unlike classical optimizers with fixed update rules and hyperparameters, L2O models are meta-trained across diverse problems to generalize to unseen tasks, making them particularly effective for complex, high-dimensional spaces such as neural architecture search and scientific computing. L2O models contain an offline procedure, where the learnable optimizer is trained on a set of similar tasks, and an online procedure, where the trained optimizer optimizes a new unseen task from the same task distribution.

Generally, an L2O approach parameterizes the update direction $\widehat G_t$ as a function of past values $w_1,\ldots,w_t$ of trainable model weights and past gradients $g_1,\ldots, g_t$ with optimizer variable $\theta$ being fixed. For a given task, $\theta$ is updated at the end of the optimization procedure, based on the performance of the optimizer on that task. With a learned $\theta$, an unseen task is further handled with the following update rule for the model weights
\begin{eqnarray}
    w_{t+1} = w_t -\gamma_t \widehat G_t(w_1,\ldots,w_t; g_1,\ldots,g_t,\theta). \nonumber
\end{eqnarray}

Practically, for a given new task, similar tasks are not necessarily available, which makes the offline procedure of L2O not possible. We propose a departure from standard L2O methods—which train $\widehat G_t$ across broad problem distributions—by instead co-training the optimizer and weights of the model for a single task. Our new approach replaces the fixed multidimensional optimizer variable $\theta\in\RR^{\bar d}$ with $\theta_t$, which is updated along with $w_t$ simultaneously at each iteration. The trainable weights of the model are updated as
\begin{eqnarray}
\label{eqn:updatew}
    w_{t+1} = w_t - \gamma_t \widehat G_t(w_1,\ldots,w_t;g_1,\ldots,g_t;\theta_t).
\end{eqnarray}
It is easy to see that (\ref{eqn:updatew}) captures all optimizers listed in Table~\ref{tab:egofG}. Optimizer variable $\theta_t$ is updated based on the gradient of an approximation loss function $$l(\theta_t;w_1,\ldots,w_t;g_1,\ldots,g_t),$$ following the update rule:
\begin{eqnarray}
\label{eqn:updatetheta}
    \theta_{t+1} = \theta_t - \beta_t \nabla_{\theta_t} l(\theta_t;w_1,\ldots,w_t;g_1,\ldots,g_t).
\end{eqnarray}

Despite L2O's empirical success in accelerating convergence, its theoretical guarantees remain unstudied. We establish that for specific parameterizations of $\widehat G_t$ and setting of $l$ to be an $L_2$ error with the ground truth being the gradient of $F$,
\begin{eqnarray}
    &&l(\theta_t;w_1,\ldots,w_t; g_1\ldots,g_t) \nonumber\\
    &&=\frac{1}{2}\left\| g_t - \widehat G_t(w_1,\ldots,w_t;g_1,\ldots,g_t,\theta_t)\right\|_2^2,\nonumber
\end{eqnarray}
both the gradient approximation error $\left\|\widehat G_t-\nabla F(w_t)\right\|$ and the optimality gap $\|w_t - w^*\|$ converge to zero simultaneously.

The relationship between convergence and variance reduction in gradient-based methods has been well-studied. \citep{johnson2013accelerating} proposed SVRG, achieving variance reduction through snapshot gradients and bias correction via full-batch gradients, guaranteeing exponential convergence for strongly convex losses.  \citep{zaheer2018adaptive} proved ADAM's convergence under growing mini-batch sizes, while \citep{qian2020impact} and \citep{Wang2022Divergence} further established a crucial link between variance reduction and convergence. A fundamental criterion for analyzing stochastic optimization methods involves verifying whether the estimation variance vanishes as $t\to\infty$. Our framework inherently satisfies this property, constituting a trainable variance reduction algorithm.

In this work, we focus on pseudo-linear approximations, where $\widehat G_t$ is formulated as a linear function with regards to $w_t$, i.e., $\widehat G_t = A_t w_t + b_t$ with optimizer variables $A_t\in\RR^{d\times d}$, $b_t\in \RR^{d}$, and $\theta_t = (A_t, b_t)$. We call it pseudo since $A_t$ and $b_t$ are functions of $w_1,\ldots,w_{t-1}$ and $g_1,\ldots,g_{t-1}$. This pseudo-linear parameterization is motivated by two key considerations. First, it draws inspiration from Taylor expansion principles where lower-order terms often provide effective approximations. Second, it enables efficient computation, as the gradient of the approximation loss can be calculated explicitly by simple tensor multiplications, which requires negligible additional computational time.

Our theoretical analysis establishes that for strongly convex loss functions, the parameters $w_t$ under the pseudo-linearly approximated gradients converge to the optimal point $w^*$ at the rate $\EE\left[\left\|w_t - w^*\right\|_2^2\right] \leq \OO(1/t)$. Furthermore, we show that the approximation variance $\EE\left[\left\|\widehat G_t - \nabla F(w_t)\right\|_2^2\right]$ also converges to zero at the same $\OO(1/t)$ rate. Comparing with the variance of SGD and ADAM, which does not converge to zero as $t\to\infty$, these results confirm that our trained optimizer functions as an effective variance reduction algorithm, which we name Trainable Optimizer (TO).

To address computational concerns, we propose two simplified variants of pseudo-linear TO:
\begin{itemize}
    \item Diagonal TO restricts $A_t$ to diagonal matrices,
    \item RankOne TO uses rank-one matrices for $A_t$.
\end{itemize}
The two variants significantly reduce the number of variables in the optimizer, which are more suitable with the practical cases with limited memory. Notably, we prove that momentum-based SGD emerges as a special case of pseudo-linear TO when appropriate learning rates and initializations $A_0$ and $b_0$ are selected.

Our comprehensive experimental evaluation compares three TO variants (Pseudo-linear (Full-TO), Diagonal, RankOne) across diverse machine learning tasks spanning three categories:
\begin{itemize}
    \item Strongly Convex losses (e.g., $L_2$-regularized logistic regression),
    \item Convex but not strongly convex losses (e.g., logistic regression without regularization),
    \item Non-convex losses (e.g., ResNet classification, Llama fine-tuning).
\end{itemize}
The results demonstrate that TO significantly outperforms ADAM in both strongly convex and complex non-convex settings, while matching ADAM's performance on standard convex tasks.  

Our main contributions are as follows.
\begin{itemize}
    \item We propose a new framework that updates the optimizer variables simultaneously with trainable weights of the model, based on (\ref{eqn:updatew}) and (\ref{eqn:updatetheta}).
    \item We show the $\OO(1/t)$ convergence of Pseudo-linear TO for strongly convex losses. 
    \item We propose two variants of pseudo-linear TO: Diagonal TO and RankOne TO. They reduce the memory requirement of pseudo-linear TO and are better suitable for large models.
    \item We show by means of experiments that TO outperforms ADAM on strongly convex problems and complex non-convex problems.
\end{itemize}

In Section 2, we review related work on trainable optimizers and variance-reduction optimization methods. Section 3 presents the TO framework with linear approximation and demonstrates the convergence of model parameters to the optimal point while reducing the approximation variance to zero. This section also introduces two simplified versions—Diagonal TO and RankOne TO. In Section 4, we present numerical experiments illustrating the improved convergence rates of the proposed methods.

\section{Related Work}

\subsection{Learning to Optimize}
The L2O paradigm represents a significant shift from traditional optimization methods, replacing theoretically-derived update rules with data-driven approaches that leverage machine learning. Pioneering work by \citep{andrychowicz2016learning} demonstrated that recurrent neural networks (RNNs) can learn effective optimization strategies that outperform conventional methods like SGD. Subsequent advances incorporated reinforcement learning (RL) frameworks \citep{li2017learning,bello2017neural}, enabling the discovery of specialized optimization algorithms for diverse applications ranging from neural architecture search to black-box optimization. Our work extends these foundations while introducing several key innovations. First, unlike standard L2O approaches that solely optimize the target model's parameters, our framework simultaneously optimizes both the model parameters and the optimization strategy itself. This co-optimization enables dynamic adaptation to problem-specific characteristics. Second, while most L2O methods lack theoretical guarantees, we provide rigorous convergence proofs showing simultaneous convergence of both trainable weights of the model and the gradient approximation error. By employing a carefully designed linear approximation, our approach achieves both computational efficiency and theoretical soundness.

\subsection{Variance Reduction}
Variance reduction has emerged as a fundamental technique for improving stability and the convergence rate of stochastic optimization methods. One particularly influential method is Stochastic Variance Reduced Gradient (SVRG), originally introduced by \citep{NIPS2013_ac1dd209}. SVRG operates by maintaining a snapshot model to effectively correct the noisy estimates obtained from stochastic gradients, thereby making the entire optimization process both faster and more numerically stable. Another important approach is SAGA, proposed by \citep{NIPS2014_ede7e2b6}, which builds upon SVRG's foundation while streamlining the process through an efficient history of gradients, making it more practical for large-scale implementations. Recent work includes the introduction of Variance Reduced Adam (VRADAM) by \citep{Wang2022Divergence}, which combines variance reduction techniques with ADAM. This hybrid approach successfully resolves ADAM's well-known divergence issues while carefully preserving its advantages in training speed and the convergence rate for challenging deep learning tasks. While other recent works, such as \citep{huang2022superadamfasteruniversalframework} and \citep{Cutkosky2019Momentum}, have demonstrated various approaches combining variance reduction techniques with modern optimization algorithms, our work makes distinct contributions by specifically focusing on utilizing the idea of trainable optimizers for variance reduction. We provide theoretical and empirical evidence showing that the gradient variance can be systematically reduced through the optimization process itself as the trainable optimizer variables are progressively refined during training.

\section{Trainable Optimizer with Pseudo-linear Approximation}
\subsection{The Pseudo-linear TO Algorithm}
In this section, we present our approach for approximating the full gradient with a trainable pseudo-linear function. Consider a stationary point $w^*$ of $F$, where we assume that $F$ is second-order differentiable at $w^*$. Drawing inspiration from the Taylor expansion of $\nabla F(w)$ centered at $w^*$, we have 
\begin{eqnarray}
    \nabla F(w) &\approx& \nabla F(w^*) + \nabla ^2F(w^*)(w-w^*)\nonumber\\
    &=& \nabla ^2F(w^*)(w-w^*),\nonumber
\end{eqnarray}
which yields a linear function of $w$. We formulate the approximated gradient as $\widehat G_t = A_t w_t+b_t$, where $A_t$ and $b_t$ represent the variables of the optimizer. The corresponding approximation loss is defined by
\begin{eqnarray}
l((A_t,b_t);w_t;g_t):=\frac{1}{2}\left\| g_t-A_tw_t-b_t\right\|_2^2,\nonumber
\end{eqnarray}
with gradients computed as
\begin{eqnarray}
\nabla_{A_t} l((A_t,b_t);w_t;g_t) &=& - (g_t - A_t w_t - b_t) w_t^\top    \nonumber\\
\nabla_{b_t} l((A_t,b_t);w_t;g_t) &=& - (g_t - A_t w_t - b_t).\nonumber
\end{eqnarray}

We introduced the Trainable Optimizer (TO) with pseudo-linear approximation in Algorithm~\ref{alg:linearTVR}. Notably, lines 5 and 6 of the algorithm implement updates to the parameters $A$ and $b$ through the gradient descent on the loss function $l((A_t,b_t);w_t;g_t)$.

The pseudo-linear TO framework admits an important interpretation as a generalized momentum-based SGD algorithm. This connection becomes exact under specific initialization conditions: when we set $A_0=0$, $b_0=0$, $\alpha_t=0$ and maintain a constant $\beta_t=\beta$ for some $0<\beta<1$, the update rules in Algorithm~\ref{alg:linearTVR} are simplified to
\begin{eqnarray}
    A_t = 0, b_t = \beta g_t + (1-\beta) b_{t-1}, \widehat G_t = b_t,\nonumber
\end{eqnarray}
which recovers precisely the standard momentum-based SGD formulation. The initialization of parameters $A_0$ and $b_0$ presents practical challenges. However, based on our theoretical analysis and empirical observations, we strongly recommend initializing both to zero, as this initialization strategy provides effective practical performance during the initial training phase.

\begin{algorithm}[tb]
\caption{Pseudo-linear TO (Full-TO)}
\label{alg:linearTVR}
\begin{algorithmic}[1]
\REQUIRE Learning rates $\alpha_t$, $\beta_t$, $\gamma_t$.
\STATE Initialize $w_1$, $A_0$ and $b_0$.
\FOR {$t=1,\ldots, T$}
\STATE Sample $\BB_t \subset \{1,\ldots,N\}$ such that $|\BB_t|=b$
\STATE $g_t\gets \nabla F^{\BB_t}(w_t)$
\STATE $A_t \gets A_{t-1} + \alpha_t(g_t -A_{t-1}w_t - b_{t-1}) w_t^{\top}$
\STATE $b_t \gets b_{t-1} + \beta_t(g_t - A_{t-1}w_t - b_{t-1})$
\STATE $\widehat G_t \gets A_t w_t + b_t$
\STATE $w_{t+1} \gets   w_t - \gamma_t \widehat G_t $
\ENDFOR
\end{algorithmic}
\end{algorithm}

\subsection{Analyses}
In this section we theoretically prove the convergence of Pseudo-linear TO under strongly convex loss. Theoretical analyses of gradient-based algorithms often rely on assumptions of bounded iterates and gradients. To this end, we modify Algorithm~\ref{alg:linearTVR} by changing line 8 to \begin{eqnarray}
    w_{t+1} \gets \Pi_\FF(w_t-\gamma_t\widehat G_t).\nonumber
\end{eqnarray}
The projection operator $\Pi_{\FF}(w)$, which maps vectors to the bounded feasible set $\FF$, is mathematically defined as 
\begin{eqnarray}
\Pi_\FF(w):=\arg\min_{w'\in\FF}\|w-w'\|_2.    \nonumber
\end{eqnarray}
Let $D_w=\sup_{w\in\FF}\|w\|_2<\infty$. In our context, we include additional optimizer variables $A_t$ and $b_t$. In this section, we first show that under our arrangement of the algorithm, the trajectory of gradients are uniformly bounded. Then, we show that $A_t$ and $b_t$ are also bounded. All proofs are in the appendix.

For the theoretical analysis, we make the following assumptions. We assume that $\FF$ is closed convex. This implies that $w_{t+1}$ is a unique minimizer. Let $\bar\FF$ be any open set that contains $\FF$. Since $\bar\FF$ is open, the derivatives on $\FF$ can be appropriately defined.

\begin{assumption}
\label{assmpt:F}
The full loss $F$ and minibatch losses $F^\BB$ for all $\BB$ satisfy the following conditions.
\begin{enumerate}
    \item Feasibility set $\FF$ must be such that $w^*\in\FF$, where $w^*$ is an optimal solution to the unconstrained problem.
    \item For all $\BB\subset\{1,\ldots,N\}$ with $|\BB|=b$, $\nabla F^\BB (w)$ is Lipschitz continuous on $\bar\FF$, i.e., there exists $L>0$ such that for all $w,w'\in\bar\FF$, we have $\left\|\nabla F^\BB(w) - \nabla F^\BB (w')\right\|\leq L\left\|w-w'\right\|_2$.
    \item  There exists $0<D_V<+\infty$ such that for all $w\in\bar\FF$, $\EE_\BB\left[\left\|\nabla F^{\BB}(w)-\nabla F(w)\right\|_2^2\right]\leq D_V,$ where $\BB$ is a random subset of $\{1,\ldots,N\}$ with $|\BB| = b$.
\end{enumerate}
\end{assumption}

The conditions are commonly made when analyzing convergence of gradient based optimization methods. Remarkably, we avoid making the uniform bounded gradient assumption, which contradicts strong convexity. However, given the Assumption and the bound for the trajectory $w_t$, i.e., $\|w_t\|_2\leq D_w$ for all $t$, the stochastic gradients $g_t$ in each iterate are also uniformly bounded for each $t$, as shown in the following proposition.
\begin{proposition}
\label{lem:Bddg}
    Given $\|w_t\|\leq D_w$, there exists $0<D_G<+\infty$ such that for all $t$, $\|g_t\|_2 \leq D_G$.
\end{proposition}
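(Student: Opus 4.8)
The plan is to bound $\|g_t\|_2 = \|\nabla F^{\BB_t}(w_t)\|_2$ uniformly in $t$ by exploiting three ingredients: the projection keeps the iterates in the bounded set $\FF$, each minibatch gradient is Lipschitz on the open set $\bar\FF \supseteq \FF$, and there are only finitely many minibatches of size $b$.

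First I would record that, since line 8 of the modified algorithm applies $\Pi_\FF$ and $w_1$ is initialized in $\FF$, every iterate satisfies $w_t\in\FF$, hence $\|w_t\|_2\leq D_w$; likewise $w^*\in\FF$ by Assumption~\ref{assmpt:F}(1), so $\|w^*\|_2\leq D_w$ and therefore $\|w_t-w^*\|_2\leq 2D_w$. Because $w^*$ minimizes the differentiable $F$ and lies in $\FF\subseteq\bar\FF$, each $\nabla F^\BB$ is defined and (by Assumption~\ref{assmpt:F}(3)) Lipschitz at $w^*$ with the same constant $L$ for all $\BB$ (take the maximum over the finitely many subsets if the constant is stated per-minibatch).

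Second I would apply the triangle inequality together with Assumption~\ref{assmpt:F}(3): for any fixed $\BB\subseteq\{1,\ldots,N\}$ with $|\BB|=b$,
\begin{eqnarray*}
\|\nabla F^\BB(w_t)\|_2 &\leq& \|\nabla F^\BB(w_t)-\nabla F^\BB(w^*)\|_2 + \|\nabla F^\BB(w^*)\|_2\\
&\leq& L\|w_t-w^*\|_2 + \|\nabla F^\BB(w^*)\|_2 \;\leq\; 2LD_w + \|\nabla F^\BB(w^*)\|_2.
\end{eqnarray*}
Since the family $\{\BB\subseteq\{1,\ldots,N\}:|\BB|=b\}$ is finite, $C:=\max_\BB\|\nabla F^\BB(w^*)\|_2<\infty$, and setting $D_G:=2LD_w+C$ gives $\|g_t\|_2\leq D_G$ for every $t$ and every realization of the sampled $\BB_t$.

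I do not anticipate a genuine obstacle. The only point requiring care is that the bound must hold simultaneously for all randomly sampled minibatches, which is exactly what the finiteness of the minibatch family delivers (equivalently: $\FF$ is closed and bounded, hence compact, and each continuous $\nabla F^\BB$ is bounded on it, so one could bypass $w^*$ entirely); and one should not forget that it is the openness of $\bar\FF\supseteq\FF$ that makes the Lipschitz estimate legitimate at every iterate. If one prefers not to assume $w_1\in\FF$, it suffices to replace $D_G$ by $\max\{2LD_w+C,\|g_1\|_2\}$, since $g_1$ is a single fixed vector.
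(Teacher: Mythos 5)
Your argument is correct and essentially identical to the paper's proof: both decompose $\|g_t\|_2$ via the triangle inequality at $w^*$, use the Lipschitz condition of Assumption (3) together with $\|w_t-w^*\|_2\leq 2D_w$, and take the maximum of $\|\nabla F^\BB(w^*)\|_2$ over the finitely many minibatches to set $D_G = 2LD_w + \max_\BB\|\nabla F^\BB(w^*)\|_2$. No issues to flag.
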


Furthermore, under specific requirements on learning rates, the optimizer variables $A_t$ and $b_t$ are also bounded. In what follows, all matrix norms are spectral.

\begin{proposition}
    \label{lem:BddAb}
    Let $S_\alpha := \sum_{s=1}^\infty \alpha_s < 1/D_w^2<\infty$ and let $\beta_t \leq 1$ for all $t$. There exist $D_A$ and $D_b$ independent on $t$, such that $\|A_t\|_2\leq D_A$ and $\|b_t\|_2 \leq D_b$ for all $t$.
\end{proposition}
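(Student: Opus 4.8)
The plan is to convert lines 5 and 6 of Algorithm~\ref{alg:linearTVR} into coupled scalar recursions for $\|A_t\|_2$ and $\|b_t\|_2$, using the uniform bounds $\|w_t\|_2\le D_w$ (from the projection step) and $\|g_t\|_2\le D_G$ (Proposition~\ref{lem:Bddg}), and then to close the resulting circular coupling with the hypothesis $D_w^2 S_\alpha<1$. Throughout one uses that the step sizes $\alpha_t,\beta_t$ are nonnegative.

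For $A_t$, I would rewrite line~5 as $A_t = A_{t-1}(I-\alpha_t w_t w_t^\top) + \alpha_t(g_t-b_{t-1})w_t^\top$. The matrix $w_t w_t^\top$ is positive semidefinite of rank one with top eigenvalue $\|w_t\|_2^2\le D_w^2$, and $\alpha_t\le S_\alpha<1/D_w^2$, so the spectrum of $I-\alpha_t w_t w_t^\top$ lies in $(0,1]$ and $\|I-\alpha_t w_t w_t^\top\|_2\le 1$. Combining this with $\|uv^\top\|_2=\|u\|_2\|v\|_2$ and the triangle inequality gives $\|A_t\|_2 \le \|A_{t-1}\|_2 + \alpha_t D_w\bigl(D_G+\|b_{t-1}\|_2\bigr)$, which telescopes to $\|A_t\|_2 \le \|A_0\|_2 + D_w\sum_{s=1}^{t}\alpha_s\bigl(D_G+\|b_{s-1}\|_2\bigr)$. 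For $b_t$, I would rewrite line~6 as the convex combination $b_t=(1-\beta_t)b_{t-1}+\beta_t\bigl(g_t-A_{t-1}w_t\bigr)$, legitimate since $0\le\beta_t\le1$; hence $\|b_t\|_2\le(1-\beta_t)\|b_{t-1}\|_2+\beta_t\bigl(D_G+D_w\|A_{t-1}\|_2\bigr)$, and a one-line induction yields $\|b_t\|_2\le\max\bigl\{\|b_0\|_2,\ D_G+D_w\bar A_{t-1}\bigr\}$, where $\bar A_t:=\max_{0\le s\le t}\|A_s\|_2$. In particular $\|b_{s-1}\|_2\le\|b_0\|_2+D_G+D_w\bar A_t$ for every $1\le s\le t$.

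Substituting this last estimate into the telescoped bound for $\|A_t\|_2$ and taking the running maximum over the first $t$ iterates gives $\bar A_t\le \|A_0\|_2 + D_w S_\alpha\bigl(2D_G+\|b_0\|_2\bigr) + D_w^2 S_\alpha\,\bar A_t$. Since $\bar A_t$ is finite for each fixed $t$ and $D_w^2 S_\alpha<1$, rearranging yields the $t$-independent bound $\bar A_t\le D_A:=\bigl(\|A_0\|_2+D_w S_\alpha(2D_G+\|b_0\|_2)\bigr)/(1-D_w^2 S_\alpha)$, and then $\|b_t\|_2\le D_b:=\|b_0\|_2+D_G+D_w D_A$, which finishes the proof.

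The main obstacle is precisely this circular dependence: the bound on $\|A_t\|_2$ involves the past $\|b_{s-1}\|_2$ and the bound on $\|b_t\|_2$ involves the past $\|A_{s-1}\|_2$. The device that breaks it is to work with the running maxima $\bar A_t$ together with the contraction-type inequality $D_w^2 S_\alpha<1$, which is the only place where the full strength of the hypothesis $S_\alpha<1/D_w^2$ is needed (beyond the term-by-term estimate $\|I-\alpha_t w_t w_t^\top\|_2\le 1$); everything else is routine norm bookkeeping. One should also note that the argument implicitly uses $\alpha_t,\beta_t\ge 0$, without which the convex-combination step for $b_t$ and the spectral estimate for $A_t$ would require minor adjustments.
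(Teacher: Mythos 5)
Your proof is correct and follows essentially the same route as the paper's: the same rewriting $A_t = A_{t-1}(I-\alpha_t w_t w_t^\top)+\alpha_t(g_t-b_{t-1})w_t^\top$ with $\|I-\alpha_t w_t w_t^\top\|_2\le 1$, the same convex-combination estimate for $b_t$, and the same use of $S_\alpha D_w^2<1$ to break the circular $A$--$b$ coupling. The only difference is organizational: the paper posits the constants $D_b$ and $D_A$ in advance (as a fixed point of the coupled inequalities) and verifies them by induction, whereas you derive equivalent constants by telescoping and solving a self-bounding inequality for the running maximum $\bar A_t$.
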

The requirement $\sum_{t=1}^\infty \alpha_s < \infty$ can be simply satisfied by $\alpha_s=6/\pi^2D_w^2s^2$. We prove this proposition by properly applying triangle inequalities and the principle of induction.

Since $F(w)=\frac{1}{\binom{N}{b}}\sum_{\BB: |\BB|=b} F^\BB(w)$, Item 3 in Assumption naturally implies that 
\begin{eqnarray}
   && \|\nabla F(w)-\nabla F(w')\|_2 \nonumber\\
    &\leq& \frac{1}{\dbinom{N}{b}}\sum_{\BB:|\BB|=b} \|\nabla F^{\BB}(w)-\nabla F^{\BB}(w')\|_2 \nonumber\\
    &\leq& L\|w-w'\|_2,\nonumber
\end{eqnarray}
i.e., $\nabla F$ is also Lipschitz continuous. Therefore we have $\left\|\nabla F(w_t)\right\|_2 \leq L\left\|w_t-w^*\right\|\leq 2LD_w < D_G$ and we have $\max\left\{\left\|\nabla F(w_t)\right\|_2, \left\|\widehat G_t\right\|_2 \right\}\leq D_G$. Using the previously defined constants $L, D_A$ and $D_G$, we present the convergence result of Pseudo-linear TO for strongly convex losses.

\begin{theorem}
\label{thm:linTVR}
    Given Assumption and that $F$ is a $c$ strongly convex function, setting $\gamma_t = \gamma/(t+\mu)$ and $\beta_t = \beta/(t-1+\mu)$, where $\gamma$ and $\beta$ satisfy
\begin{eqnarray}
    \gamma > \frac{1}{c}\;\;\text{and}\;\;
    \beta > 1 + \frac{4\gamma^2 L^2\sqrt{D_A^2+L^2}}{ c(\gamma c- 1)} + 2\gamma \sqrt{D_A^2 +L^2} ,\nonumber
\end{eqnarray}
and $\alpha_t = \alpha /(t-1+\mu)^2$ with $\sum_{s=1}^\infty \alpha_s < 1/D_w^2$, and $\mu$ to be large enough such that $\alpha_1 < 1, \beta_1<1$, $\gamma_1 < 1$, and $\alpha D_w^2 /\mu + \beta < \mu$, there exist $M_1,M_2>0$ such that for all $t$, we have
\begin{eqnarray}
    \EE\left[\left\|w_t-w^*\right\|_2^2\right]&\leq& \frac{M_1}{t+\mu}\nonumber\\
    \EE\left[\left\|\widehat G_t-\nabla F(w_t)\right\|_2^2\right]&\leq& \frac{M_2}{t+\mu}.\nonumber
\end{eqnarray}
\end{theorem}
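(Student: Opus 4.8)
The plan is to track the two coupled quantities $a_t := \EE[\|w_t-w^*\|_2^2]$ and $v_t := \EE[\|\widehat G_t-\nabla F(w_t)\|_2^2]$ and to prove by one simultaneous induction that $a_t \le M_1/(t+\mu)$ and $v_t \le M_2/(t+\mu)$, fixing the constants $M_1,M_2$ only at the end. Work with the filtration $\FF_t := \sigma(w_1,\BB_1,\ldots,\BB_{t-1})$, under which $w_t, A_{t-1}, b_{t-1}, \widehat G_{t-1}$ are measurable while $\epsilon_t := g_t - \nabla F(w_t)$ satisfies $\EE[\epsilon_t\mid\FF_t]=0$ and $\EE[\|\epsilon_t\|_2^2\mid\FF_t]\le D_V$ by item 4 of the Assumption. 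Substituting lines 5--7 of Algorithm~\ref{alg:linearTVR} into $\widehat G_t = A_t w_t + b_t$ gives, with $e_t := \widehat G_t - \nabla F(w_t)$, $\lambda_t := \alpha_t\|w_t\|_2^2 + \beta_t$, and $\eta_t := \nabla F(w_t) - \nabla F(w_{t-1}) - A_{t-1}(w_t-w_{t-1})$, the key identity
\begin{eqnarray}
e_t = (1-\lambda_t)\,e_{t-1} + \lambda_t\,\epsilon_t - (1-\lambda_t)\,\eta_t.\nonumber
\end{eqnarray}
The requirement $\alpha D_w^2/\mu + \beta < \mu$ forces $0<\lambda_t<1$ with $\beta_t \le \lambda_t \le C_\lambda/(t-1+\mu)$, and since $\alpha_t$ decays like $1/t^2$ we also have $\lambda_t = \beta_t(1+o(1))$. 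So the approximation error evolves as a contraction (factor $1-\lambda_t$, with $\lambda_t$ vanishing at the critical rate $\Theta(1/t)$), perturbed by a conditionally mean-zero noise of order $\lambda_t$ and a deterministic drift $\eta_t$ tied to the last step $w_{t-1}\to w_t$.

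From here I extract two scalar recursions. For $v_t$: squaring the identity and taking $\EE[\cdot\mid\FF_t]$ annihilates the cross term with $\epsilon_t$, leaving $\EE[\|e_t\|_2^2\mid\FF_t] \le (1-\lambda_t)^2\|e_{t-1}-\eta_t\|_2^2 + \lambda_t^2 D_V$. Using Lipschitzness of $\nabla F$ (established just before the theorem), Proposition~\ref{lem:BddAb}, nonexpansiveness of $\Pi_\FF$, and $\nabla F(w^*)=0$, one bounds $\|\eta_t\|_2 \le (L+D_A)\|w_t-w_{t-1}\|_2 \le (L+D_A)\gamma_{t-1}\big(L\|w_{t-1}-w^*\|_2 + \|e_{t-1}\|_2\big)$, so $\EE\|\eta_t\|_2^2$ is of order $\gamma_{t-1}^2(a_{t-1}+v_{t-1})$. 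Expanding $\|e_{t-1}-\eta_t\|_2^2$ while carefully retaining the cross term $\langle e_{t-1},\eta_t\rangle$ — which is of order $\gamma_{t-1}\|e_{t-1}\|_2^2$, i.e.\ $\Theta(1/t)$ times $\|e_{t-1}\|_2^2$, and so degrades the contraction factor by a term linear in $\gamma$ — yields, after full expectation, a recursion of the form $v_t \le (1 - p^v/(t-1+\mu))\,v_{t-1} + c^v\,a_{t-1}/(t-1+\mu) + \OO(1)/(t-1+\mu)^2$, where $p^v$ is essentially $\beta - 2\gamma\sqrt{D_A^2+L^2}$ (up to a lower-order $\gamma^2/\beta$ correction) and $c^v$ is of order $\gamma^2 L^2\sqrt{D_A^2+L^2}/\beta$. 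For $a_t$: nonexpansiveness of $\Pi_\FF$ with $w^*\in\FF$ gives $\|w_{t+1}-w^*\|_2^2 \le \|w_t-w^*\|_2^2 - 2\gamma_t\langle\widehat G_t,w_t-w^*\rangle + \gamma_t^2\|\widehat G_t\|_2^2$; splitting $\widehat G_t = \nabla F(w_t)+e_t$, $c$-strong convexity gives $\langle\nabla F(w_t),w_t-w^*\rangle\ge c\|w_t-w^*\|_2^2$, a Young step handles $\langle e_t,w_t-w^*\rangle$, and $\|\widehat G_t\|_2\le D_G$ bounds the last term, giving for any $\kappa\in(0,2c-1/\gamma)$ (nonempty since $\gamma>1/c$) a recursion $a_{t+1} \le (1 - (2c-\kappa)\gamma/(t+\mu))\,a_t + (\gamma/\kappa)\,v_t/(t+\mu) + \gamma^2 D_G^2/(t+\mu)^2$.

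The last step is the coupled induction, using the elementary fact that $u_{t+1}\le(1-p/(t+\mu))u_t + r/(t+\mu)^2$ with $p>1$ and $\mu$ large forces $u_t\le\max\{(1+\mu)u_1,\ r/(p-1)\}/(t+\mu)$ (a two-line induction). The base cases hold because $\|w_1-w^*\|_2\le 2D_w$ and $\|e_1\|_2\le 2D_G$. In the inductive step I bound $v_t$ first from the $v$-recursion (using the hypotheses for $a_{t-1}, v_{t-1}$, so that the $a_{t-1}$ coupling becomes an $\OO(1/t^2)$ additive term) and then $a_{t+1}$ from the $a$-recursion (using the hypothesis for $a_t$ and the bound just obtained for $v_t$, likewise making the $v_t$ coupling additive of order $1/t^2$). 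Applying the lemma to each recursion turns the consistency requirement into two linear inequalities roughly of the form $M_2 \ge (c^v M_1 + \OO(1))/(p^v-1)$ and $M_1 \ge ((\gamma/\kappa)M_2 + \gamma^2 D_G^2)/((2c-\kappa)\gamma-1)$; a common positive pair $(M_1,M_2)$ exists precisely when the product of the two coupling coefficients is below $1$, i.e.\ $(\gamma/\kappa)\,c^v < [(2c-\kappa)\gamma-1]\,(p^v-1)$. Choosing $\kappa=c$ makes the ratio $(\gamma/\kappa)/((2c-\kappa)\gamma-1)$ equal to $\gamma/(c(\gamma c-1))$, and substituting $c^v$ and $p^v$ turns this into an inequality of exactly the stated form $\beta > 1 + 4\gamma^2L^2\sqrt{D_A^2+L^2}/(c(\gamma c-1)) + 2\gamma\sqrt{D_A^2+L^2}$. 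I expect the main obstacle to be precisely this bookkeeping: because each $\OO(1/t^2)$ perturbation is proportional to the \emph{other} sequence's constant, one must show the feedback loop $M_1\to M_2\to M_1$ contracts, and it is the strong-convexity margin $\gamma c-1>0$ together with the stated lower bound on $\beta$ that guarantee a valid choice of $(M_1,M_2)$; everything else is routine inequality manipulation, plus checking that $\mu$ large enough keeps all contraction factors and the quantity $\lambda_t$ in their admissible ranges from $t=1$ onward.
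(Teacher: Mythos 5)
Your proposal is correct and follows essentially the same route as the paper's proof: the same error-propagation identity $e_t=(1-\delta_t)(e_{t-1}-\eta_t)+\delta_t(g_t-\nabla F(w_t))$, conditional expectation to kill the noise cross term, Lipschitzness of $\nabla F$, bounded $A_t$ and nonexpansive projection to turn the drift into $\gamma_{t-1}^2$-order terms, strong convexity plus a Young step (your $\kappa=c$) for the iterate recursion, and a coupled simultaneous induction whose feasibility of $(M_1,M_2)$ reduces to exactly the stated condition on $\beta$ (which the paper packages via its Lemma on the two linear inequalities). The only cosmetic difference is your triangle-inequality bound $\|\eta_t\|_2\le(L+D_A)\|w_t-w_{t-1}\|_2$, where the paper keeps the two drift vectors separate in the three-vector Cauchy step to obtain the slightly sharper constant $\sqrt{D_A^2+L^2}$ that appears in the theorem's threshold for $\beta$.
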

To prove the theorem, we jointly bound $\EE\left[\left\|w_t-w^*\right\|_2^2\right]$ and $\EE\left[\left\|\widehat G_t-\nabla F(w_t)\right\|_2^2\right]$ recurrently and apply induction. The expectations are over history up to $t$. Theorem~\ref{thm:linTVR} shows that for strongly convex losses, Pseudo-linear TO achieves an $\OO(1/t)$ convergence rate, matching the rates of SGD and Momentum. Comparing to ADAM, which has strongly convex examples with non-convergent variance \citep{Wang2022Divergence}, namely
$\EE\left[\|\widehat G_t^{\mathrm{(ADAM)}} -\nabla F(w_t)\|^2\right] \not\to 0$, the convergence result for Pseudo-linear TO automatically achieves the reduction of variance. 

Relaxing the requirement of strong convexity, we also present the convergence of Pseudo-linear TO for non-convex losses.
\begin{theorem}
\label{thm:linTVR_noncvx}
    Given Assumption, setting $\gamma_t = \gamma/(t+\mu)$ and $\beta_t = \beta/(t-1+\mu)$, where $\gamma$ and $\beta$ satisfy $\gamma < \frac{1}{L}$ and $\beta>1+ 6\gamma \sqrt{D_A^2 +L^2}$, and $\alpha_t = \alpha /(t-1+\mu)^2$ with $\sum_{s=1}^\infty \alpha_s < 1/D_w^2$, and $\mu$ to be large enough such that $\alpha_1 < 1, \beta_1<1$, $\gamma_1 < 1$, and $\alpha D_w^2 /\mu + \beta < \mu$, we have 
    \begin{eqnarray}
    \min_{1\leq t \leq T} \EE\left[\|\widehat G_t - G_t\|_2^2\right] \leq \OO\left(\frac{1}{\log T}\right)\nonumber\\
    \min_{1\leq t \leq T} \EE\left[\|G_t\|_2^2\right] \leq \OO\left(\frac{1}{\log T}\right).\nonumber
\end{eqnarray}
\end{theorem}

\subsection{Simplified Variants of Pseudo-linear TO}

\begin{algorithm}
\caption{Diagonal TO (Diag-TO)}
\label{alg:diagTVR}
\begin{algorithmic}[1]
\REQUIRE Learning rates $\alpha_t, \beta_t$, $\gamma_t$.
\STATE Initialize $w_1, a_0$ and $b_0$.
\FOR {$t=1,\ldots, T$}
\STATE Sample $\BB_t \subset \{1,\ldots,N\}$ such that $|\BB_t|=b$
\STATE $g_t\gets \nabla F^{\BB_t}(w_t)$
\STATE $a_t \gets a_{t-1} + \alpha_t(g_t -a_{t-1}\odot w_t - b_{t-1})\odot w_t$
\STATE $b_t \gets b_{t-1} + \beta_t(g_t - a_{t-1}\odot w_t - b_{t-1})$
\STATE $\widehat G_t \gets a_t \odot w_t + b_t$
\STATE $w_{t+1} \gets w_t - \gamma_t \widehat G_t $
\ENDFOR
\end{algorithmic}
\end{algorithm}

\begin{algorithm}
\caption{RankOne TO (RO-TO)}
\label{alg:rankoneTVR}
\begin{algorithmic}[1]
\REQUIRE  Learning rates $\alpha_t, \beta_t$, $\gamma_t$.
\STATE Initialize $w_1, a_0, c_0$ and $b_0$.
\FOR {$t=1,\ldots, T$}
\STATE Sample $\BB_t \subset \{1,\ldots,N\}$ such that $|\BB_t|=b$
\STATE $g_t\gets \nabla F^{\BB_t}(w_t)$
\STATE $a_t \gets a_{t-1} + \alpha_t(g_t -a_{t-1}c_{t-1}^\top w_t - b_{t-1}) c_{t-1}^{\top}w_t$
\STATE $c_t \gets c_{t-1} + \alpha_t(g_t -a_{t-1}c_{t-1}^\top w_t - b_{t-1})^\top a_{t-1} w_t$
\STATE $b_t \gets b_{t-1} + \beta_t(g_t - a_{t-1}c_{t-1}^\top w_t - b_{t-1})$
\STATE $\widehat G_t \gets a_t c_t^\top w_t + b_t$
\STATE $w_{t+1} \gets w_t - \gamma_t \widehat G_t$
\ENDFOR
\end{algorithmic}
\end{algorithm}

A notable limitation of Pseudo-linear TO lies in its requirement of memory. For model weights $w\in\RR^d$, the optimizer variables $A$ and $b$ introduce $d^2+d$ new parameters, which is impractical for large models, when $d$ is large. To address this challenge, we propose two memory-efficient variants of the fully parameterized Pseudo-linear TO (Full-TO) introduced in Algorithm~\ref{alg:linearTVR}.
\begin{itemize}
    \item \textbf{Diagonal TO} (Algorithm~\ref{alg:diagTVR}) enforces $A_t = \mathrm{diag}\{a_t\}$, where $a_t\in\RR^d$, reducing the parameter count to $\OO(d)$. Here, $\odot$ denotes element-wise vector multiplication.
    \item \textbf{RankOne TO} (Algorithm~\ref{alg:rankoneTVR}) employs factorization $A_t = a_t c_t^\top$ with $a_t, c_t\in\RR^d$, similarly achieving memory complexity $\OO(d)$ while maintaining a richer parameter interaction.
\end{itemize}

The gradient update rules in both algorithms (Lines 5-6 of Algorithm~\ref{alg:diagTVR} and Lines 5-7 of Algorithm~\ref{alg:rankoneTVR}) are derived from the corresponding structured gradient computations for each parameterization. These modifications enlarge the scale of problems that are suitable under the Pseudo-linear TO framework. It is easy to extend the proof of Theorem~\ref{thm:linTVR} and show the convergence of Diagonal TO and RankOne TO. 

\section{Experiments}
\subsection{Datasets and Implementations}
We conduct comprehensive empirical evaluations comparing various TO implementations against ADAM, Momentum and SAGA across multiple real-world datasets.

\begin{itemize}
\item \textbf{MNIST} \citep{deng2012mnist}: A handwritten digit dataset comprising of 60,000 grayscale images (28×28 pixels)
\item \textbf{News20} \citep{Lang95}: Approximately 18,000 newsgroup posts across 20 topics
\item \textbf{CovType} \citep{Blackard1998CoverType}: Forest cover type prediction dataset with 581,012 samples and 54 features across 7 classes
\item \textbf{SmallNorb} \citep{LeCun2004LearningMF}: 3D object recognition dataset containing images of 50 toys from 5 categories
\item \textbf{KDD} \citep{tavallaee2009detailed}: Network intrusion detection subset from KDD CUP 99
\item \textbf{CIFAR-10} \citep{krizhevsky2009learning}: 60,000 color images across 10 object classes
\item \textbf{Alpaca} \citep{alpaca}: 52,000 instruction-following demonstrations generated by text-davinci-003
\end{itemize}

The models employed are logistic regressions or deep neural networks. The architectures are described later in the appendix. The last dataset corresponds to fine-tuning of an LLM. The experiments span strongly convex, convex, non-convex and LLM tasks.

Given the negligible computational overhead beyond gradient computation, we focus on convergence speed measured in epochs. For each experiment, we perform extensive hyperparameter tuning:
\begin{itemize}
    \item ADAM learning rates: $\gamma \in [10^{-5}, \;2\times10^{-5},\; 5\times 10^{-5},\; 10^{-4} ,\; 2\times 10^{-4},\; 5\times 10^{-4},\; 10^{-3},\; 2\times 10^{-3},\; 5\times10^{-3}]$, 
    \item TO methods: $\gamma \in [10^{-3},\; 2\times 10^{-3},\; 5\times10^{-3},\; 0.01,\; 0.02,\; 0.05,\; 0.1,\; 0.2,\; 0.5]$,
    \item TO-specific parameters: $\alpha, \beta \in$ [0.0, 0.01, 0.1, 0.5, 1.0].
\end{itemize}
All the hyperparameters are tested under constant and exponentially decaying scheme. The exponential decay rates are selected among $[0.6,\; 0.8,\; 0.95]$. The trainable weights of logistic regression and FFN are initialized from standard normal sampling. The mini-batch size is set to be 64. 

We use the full training loss as our primary evaluation metric. To ensure statistical reliability, we repeat each experiment five times with different random seeds. Using ADAM as our baseline, we calculate the relative performance difference $\rho$ for each method. Let $\text{loss}_k$ be the training loss of a considered algorithm in epoch $k$. The relative difference is defined as
\begin{eqnarray}
    \rho = \text{avg}\left(\frac{\min_{1\leq k\leq T} \{\text{ADAM loss}_k\} - \min_{1\leq k\leq T} \{\text{loss}_k\}}{\min_{1\leq k\leq T} \{\text{ADAM loss}_k\}}\right),\nonumber
\end{eqnarray}
where $T$ is the number of epochs and the average is taken across multiple runs. Metric $\rho > 0$ indicates superior performance to ADAM in terms of achieved minimum loss. We complement this with two-sided Wald tests on the relative differences, reporting significance levels $s$. Our statistical analysis follows these conventions:
\begin{itemize}
    \item $s<0.05$: Significantly better than ADAM
    \item $s > 0.95$: Significantly worse than ADAM
    \item $0.05 \leq s \leq 0.95$: No significant difference from ADAM.
\end{itemize}

Our theoretical analyses show that pseudo-linear and other variants of TO achieve an $\OO(1/t)$ convergence rate for strongly convex losses, and a convergence of the approximate variance with the same convergence rate. This suggests the expectation of TO's advantage on strongly convex task and noisy and complex tasks.

\subsection{Convex Experiments}
\label{sec:convex_experiments}
For convex losses, we train logistic regression models on five of the previously mentioned datasets. Since our theoretical analyses primarily address strongly convex losses, we emphasize results using logistic regression with cross-entropy loss and L2 regularization. Our experimental protocol involves three key validation steps to determine the optimal regularization parameter $\lambda$. First, we randomly partition each dataset into training and validation sets. Second, we train models using ADAM with $\lambda$ values from the discrete set $[0,\; 5\times 10^{-4},\; 10^{-3},\; 5\times 10^{-3},\; 0.01,\; 0.05]$, selecting the $\lambda$ yielding best validation performance with ADAM. When the optimal $\lambda$ equals zero, we conduct additional evaluations by randomly sampling five normally distributed $\lambda$ values with mean being 0 and standard deviation of 1, and filter the negative values, in order to thoroughly compare TO variants against ADAM on strongly convex losses. The final selected $\lambda$ values for each dataset appear in the appendix. 

\begin{table}[h]
\centering
\begin{tabular}{r|r|r|r|r}
Dataset & $\lambda$ & Full-TO & Diag-TO & RO-TO \\ \hline
SmallNorb & $\lambda^*$ & N/A & $\mathbf{18.0}\;(+)$ & $13.0\;(+)$ \\ \hline
CovType & $\lambda^*$ & $\mathbf{3.3}\;(\sim)$ & $-2.0\;(\sim)$ & $-2.1\;(\sim)$ \\ \hline
\multirow{5}{*}{KDD} & $\lambda_1$ & $15.0\;(+)$ & $\mathbf{16.0}\;(+)$ & $14.0\;(+)$ \\
 & $\lambda_2$ & $6.0\;(+)$ & $\mathbf{5.8}\;(+)$ & $-3.0\;(\sim)$ \\
 & $\lambda_3$ & $10.0\;(+)$ & ${10.0\;(+)}$ & $\mathbf{10.2}\;(+)$ \\
 & $\lambda_4$ & $\textbf{9.0}\;(+)$ & $7.6\;(+)$ & $-6.0\;(\sim)$ \\
 & $\lambda_5$ & $11.0\;(+)$ & $\mathbf{11.2}\;(+)$ & $8.7\;(+)$ \\ \hline
\multirow{5}{*}{News20} & $\lambda_1$ & N/A & $\mathbf{0.4}\;(+)$ & $0.4\;(+)$ \\
 & $\lambda_2$ & N/A & $\mathbf{0.2}\;(+)$ & $0.2\;(+)$ \\
 & $\lambda_3$ & N/A & $\mathbf{4.4}\;(+)$ & $4.3\;(+)$ \\
 & $\lambda_4$ & N/A & $\mathbf{4.8}\;(+)$ & $4.8\;(+)$ \\
 & $\lambda_5$ & N/A & $\mathbf{16.8}\;(+)$ & $16.7\;(+)$ \\ \hline
\multirow{5}{*}{MNIST} & $\lambda_1$ & $\mathbf{4.2}\;(+)$ & $3.9\;(+)$ & $-1.1\;(-)$ \\
 & $\lambda_2$ & $\mathbf{4.4}\;(+)$ & $3.9\;(+)$ & $-1.0\;(-)$ \\
 & $\lambda_3$ & $-7.1\;(-)$ & $\mathbf{0.7}\;(+)$ & $\mathbf{0.7}\;(+)$ \\
 & $\lambda_4$ & $\mathbf{0.3}\;(+)$ & $0.2\;(+)$ & $0.2\;(+)$ \\
 & $\lambda_5$ & $\mathbf{0.7}\;(+)$ & $0.6\;(+)$ & $-3.4\;(-)$
\end{tabular}
\caption{Values of $\rho$ for strongly convex tasks (all values $\times 10^{-3}$). Full-TO denotes for full version of pseudo-linear TO in Algorithm~\ref{alg:linearTVR}. Diag-TO and RO-TO are short for the two memory efficient variants of TO described in Algorithm~\ref{alg:diagTVR} and Algorithm~\ref{alg:rankoneTVR}. `+', `$\sim$' and `-' denote for $s< 0.05$, $0.05\leq s\leq 0.95$, and $s>0.95$, respectively.}
\label{tab:strcvxresult}
\end{table}

\begin{figure}[!t]
    \centering

    \begin{subfigure}[b]{0.8\linewidth}
        \includegraphics[width=\linewidth]{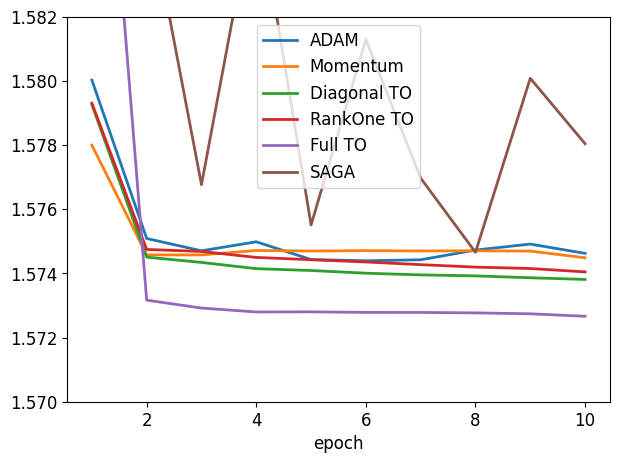}
        \caption{CovType}
    \end{subfigure}

    \begin{subfigure}[b]{0.8\linewidth}
        \includegraphics[width=\linewidth]{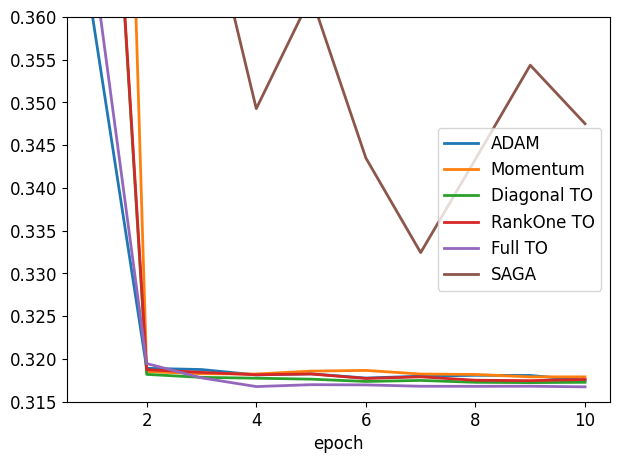}
        \caption{KDD}
    \end{subfigure}
    
    \begin{subfigure}[b]{0.8\linewidth}
        \includegraphics[width=\linewidth]{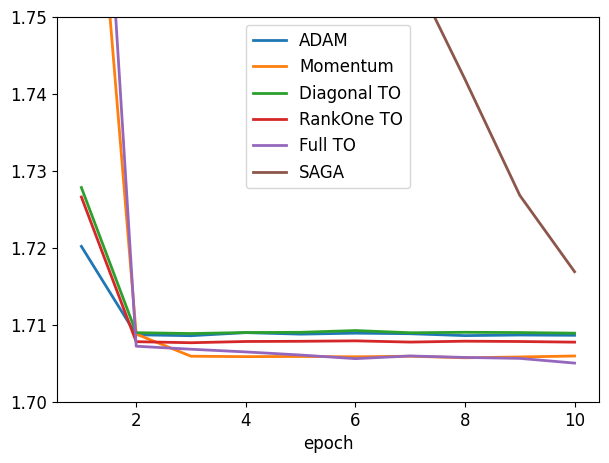}
        \caption{MNIST}
    \end{subfigure}
    \caption{Training loss curves for logistic regression ($\lambda=0$) on different datasets}
    \label{fig:convex}
\end{figure}

Table~\ref{tab:strcvxresult} presents the values of $\rho$ and significance for all strongly convex experiments. Due to memory constraints, we exclude the full version of pseudo-linear TO results for SmallNorb and News20 datasets. For experiments in which Full-TO is feasible, we report the memory requirements of Full-TO and its two memory-efficient variants in Table~\ref{tab:memory}. The results show that Diagonal TO and RankOne TO substantially reduce the memory requirements compared to Full-TO, making them more suitable for high-dimensional optimization problems. Our analysis of 17 strongly convex experiments reveals that TO variants significantly outperform ADAM in 16 cases. TO variants also outperform Momentum in 15 cases. The complete values of $\rho$ including Momentum are in the appendix.

\begin{table}[]
\centering
\begin{tabular}{l|l|l|l}
                 & Full-TO  & Diag-TO  & RO-TO    \\ \hline
Logistic-CovType & 150 KB   & 1.5 KB   & 2.3 KB   \\ \hline
Logistic-MNIST   & 400.8 MB & 80.1 KB  & 120.1 KB \\ \hline
FFN-CovType      & 381.6 MB & 78.1 KB  & 117.2 KB \\ \hline
FFN-MNIST        & 39.9 GB  & 808.7 KB & 1.2 MB  
\end{tabular}
\caption{Memory requirements of Full-TO and its memory-efficient variants}
\label{tab:memory}
\end{table}

To investigate performance on convex but non-strongly-convex losses, we conduct parallel experiments without regularization $\lambda = 0$ on the same datasets. In Figure~\ref{fig:convex}, we plot the curve of training loss of the $\lambda=0$ experiments for the three datasets where Full TO could be evaluated (CovType, MNIST and KKD) and find that pseudo-linear TO exceeds the performances of ADAM, Momentum and SAGA. Additional results, such as training losses for different additional datasets, and the standard deviation of the train loss among different random seeds can be found in the appendix. Our empirical findings suggest initializing with $\alpha \approx 0.01$ while $\beta\approx 1$. Exponential decaying of $\alpha$ and $\beta$ also plays crucial rules practically. These settings provide robust performance across different datasets.

\subsection{Nonconvex Experiments}

\begin{figure}[!t]
    \centering
    \begin{subfigure}[b]{0.8\linewidth}
        \includegraphics[width=\linewidth]{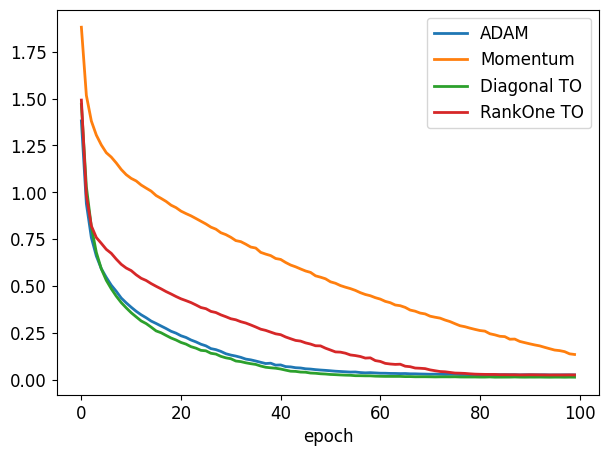}
        \caption{ResNet18-CIFAR10}
    \end{subfigure}
    
    \begin{subfigure}[b]{0.8\linewidth}
        \includegraphics[width=\linewidth]{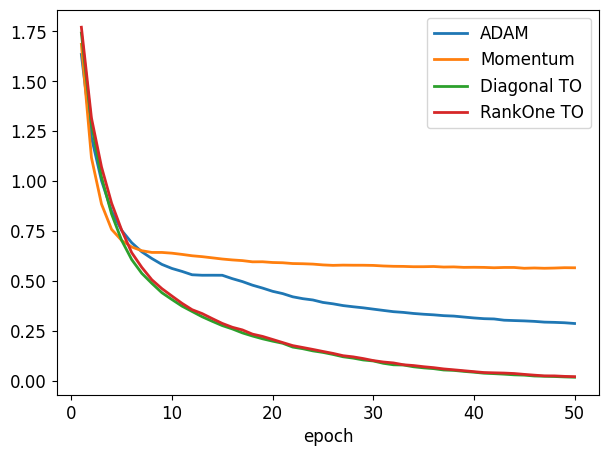}
        \caption{ResNet50-CIFAR10}
    \end{subfigure}
    
    \begin{subfigure}[b]{0.8\linewidth}
        \includegraphics[width=\linewidth]{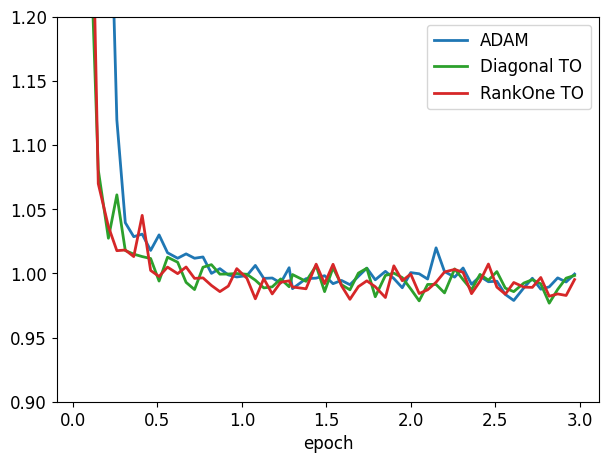}
        \caption{Llama7b-Alpaca}
    \end{subfigure}
    \caption{Training loss curves for non-convex tasks}
    \label{fig:noncvx}
\end{figure}
\begin{figure}[!t]
    \centering
    \begin{subfigure}[b]{0.8\linewidth}
        \includegraphics[width=\linewidth]{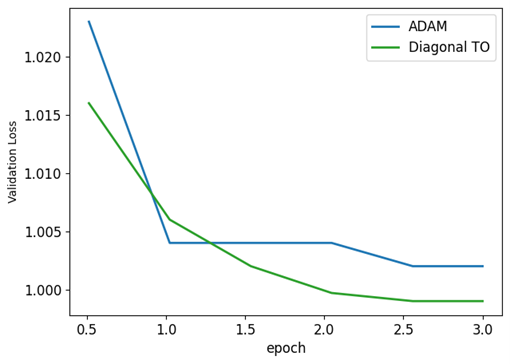}
    \end{subfigure}
    
    \caption{Validation loss curve for Llama7b-alpaca task}
    \label{fig:llm_val}
\end{figure}

For non-convex optimization tasks, we conduct three experimental evaluations. First, we train FFNs on both the CovType and KDD datasets. Second, we train ResNets of varying depths (18 and 50 layers) on the CIFAR-10 classification task. Third, we perform fine-tuning of a selected subset of pretrained weights from the Llama-7b model using the Alpaca dataset.

\begin{table}[h]
\centering
\setlength{\tabcolsep}{3pt} 
\begin{tabular}{r|r|r|r|r}
Model & Dataset & Full-TO & Diag-TO & RO-TO \\ \hline
\multirow{2}{*}{FFN} & CovType & $-1.3\times 10^{-4}$ & $-1.8\times 10^{-4}$ & $-2.0\times 10^{-4}$ \\
 & KDD & $-1.1\times 10 ^{-3}$ & $-3.2\times 10^{-3}$ & $-2.1\times 10^{-3}$ \\ \hline
ResNet18 & \multirow{2}{*}{CIFAR10} & N/A & $\mathbf{0.5}$ & $8.2\times 10^{-3}$ \\ \cline{1-1}
ResNet50 &  & N/A & $\mathbf{0.9}$ & $\mathbf{0.9}$ \\ \hline
Llama-7b & Alpaca & N/A & $\mathbf{2.1\times 10^{-3}}$ & $-9.2\times 10^{-4}$
\end{tabular}
\caption{Values of $\rho$ for non-convex tasks}
\label{tab:NonCvx}
\end{table}

The relative performance differences of TO variants are reported in Table~\ref{tab:NonCvx}. Figure~\ref{fig:noncvx} shows the curves of training loss for the tasks of training CIFAR10 with ResNet and finetuning Llama. These experiments demonstrate TO's superior performance on challenging non-convex problems. For the Llama fine-tuning task, TO achieves faster initial convergence compared to ADAM. In the ResNet classification tasks, TO consistently discovers solutions with better final optimality. To investigate the effect of TO on the generalization performance of LLMs, we evaluate the validation loss in the Llama-7b Alpaca experiment, with the results shown in Figure~\ref{fig:llm_val}. We observe that Diagonal TO consistently outperforms ADAM in terms of validation-loss convergence, particularly during the early stages of training. Moreover, Diagonal TO achieves a lower final validation loss than ADAM. Given its rapid loss reduction in the initial iterations, these results suggest that TO is especially well suited for LLM training scenarios where computational budgets or training time are limited. The training curves for FFN are in the appendix, which show comparable performance between TO and ADAM, with no statistically significant difference observed. 

\section{Conclusion}
This paper introduces Trainable Optimizer (TO), a novel optimization framework that jointly learns model parameters and optimization dynamics through a trainable linear gradient approximation, with theoretical guarantees showing simultaneous convergence of both parameters and approximate error. We develop memory-efficient Diagonal and RankOne TO variants that maintain $\OO(d)$ complexity while preserving convergence properties. Experiments across convex and non-convex tasks on seven datasets demonstrate TO's advantages: superior performance to ADAM in 16/17 strongly convex cases, faster convergence and better solutions for ResNet training and Llama fine-tuning, and comparable performance on simpler non-convex tasks.

\newpage

\bibliography{example_paper}
\bibliographystyle{icml2026}

\newpage
\appendix
\onecolumn
\section{Proofs}
\subsection{Technical Lemmas}
\begin{lemma}
\label{lem:BddW}
    For all $t$, $\left\|w_{t+1} - w_{t}\right\|_2 \leq \gamma_t \left\|\widehat G_t\right\|_2$.
\end{lemma}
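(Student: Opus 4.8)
The plan is to exploit the non-expansiveness of the Euclidean projection onto a closed convex set, together with the fact that every iterate $w_t$ already lies in $\FF$. Recall that in the analyzed version of the algorithm line 8 reads $w_{t+1} = \Pi_\FF(w_t - \gamma_t \widehat G_t)$, and that $w_1 \in \FF$ by initialization (indeed $w^* \in \FF$ and we may take $w_1 \in \FF$), so by induction $w_t \in \FF$ for every $t$ since each $w_{t+1}$ is by definition a point of $\FF$.

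First I would record the standard fact that for a nonempty closed convex set $\FF$, the projection $\Pi_\FF$ is firmly nonexpansive, and in particular $1$-Lipschitz: $\|\Pi_\FF(x) - \Pi_\FF(y)\|_2 \leq \|x - y\|_2$ for all $x, y$. This follows from the variational characterization $\langle x - \Pi_\FF(x), w' - \Pi_\FF(x)\rangle \leq 0$ for all $w' \in \FF$, applied at $x$ and at $y$ and added; since $\FF$ is closed convex (as assumed), $\Pi_\FF$ is also well defined and single-valued.

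Next I would apply this with $x = w_t - \gamma_t \widehat G_t$ and $y = w_t$. Because $w_t \in \FF$, we have $\Pi_\FF(w_t) = w_t$, hence
\begin{eqnarray}
\|w_{t+1} - w_t\|_2 = \|\Pi_\FF(w_t - \gamma_t \widehat G_t) - \Pi_\FF(w_t)\|_2 \leq \|(w_t - \gamma_t \widehat G_t) - w_t\|_2 = \gamma_t \|\widehat G_t\|_2,\nonumber
\end{eqnarray}
which is exactly the claimed bound. (For the unprojected version of the algorithm the inequality is an equality and the statement is immediate.)

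There is essentially no serious obstacle here; the only point requiring a line of justification is that $w_t \in \FF$ for all $t$, which is where the induction on the projection step enters, and the invocation of the nonexpansiveness of $\Pi_\FF$, which is a textbook property of projections onto closed convex sets. I would state these two facts explicitly and then the one-line computation above finishes the proof.
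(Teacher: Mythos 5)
Your proposal is correct and follows essentially the same route as the paper: both use the non-expansiveness of $\Pi_\FF$ together with $w_t\in\FF$ (so $\Pi_\FF(w_t)=w_t$) to get the one-line bound; the paper simply cites the $1$-Lipschitz property of the projection from the literature, whereas you derive it from the variational characterization.
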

\begin{proof}
\citep{reddi2019convergence} shows that for all $u$ and $u'$, $\|\Pi_\FF(u) - \Pi_\FF(u')\|_2\leq \|u-u'\|_2$. In our case, we have $w_t\in \FF$, therefore $\|w_{t+1} - w_t\|_2 = \|\Pi_{\FF}(w_t-\gamma_t\widehat G_t)-\Pi_{\FF}(w_t)\|_2 \leq \|\gamma_t \widehat G_t\|_2.$
\end{proof}

\begin{lemma}
    \label{lem:SolutiontoIneq} Let $0<k_2<k_1$  and $b_1<0<b_2$, for any $x_0$ and $y_0$. There exists $x\geq x_0$ and $y\geq y_0$ such that $k_2 x+b_2 \leq y \leq k_1 x+b_1$.
\end{lemma}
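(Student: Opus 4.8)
The plan is to produce an explicit pair $(x,y)$. Observe first that for a fixed $x$, a $y$ with $k_2 x + b_2 \leq y \leq k_1 x + b_1$ exists precisely when the interval $[k_2 x + b_2,\, k_1 x + b_1]$ is nonempty, i.e.\ when $k_2 x + b_2 \leq k_1 x + b_1$. Since $k_1 > k_2$, this is equivalent to $x \geq (b_2 - b_1)/(k_1 - k_2)$, and the right-hand side is a finite positive number because $b_1 < 0 < b_2$ forces $b_2 - b_1 > 0$. So the first requirement is simply to take $x$ at least this large.

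I would then take $y$ to be the left endpoint, $y := k_2 x + b_2$. With this choice the lower bound $k_2 x + b_2 \leq y$ holds with equality, and the upper bound $y \leq k_1 x + b_1$ holds whenever $x$ clears the threshold above. It remains to arrange $x \geq x_0$ and $y \geq y_0$; the latter reads $k_2 x + b_2 \geq y_0$, i.e.\ $x \geq (y_0 - b_2)/k_2$, which is permissible since $k_2 > 0$. All three constraints on $x$ are lower bounds, hence mutually compatible, so I set
\[
x := \max\!\left\{\, x_0,\ \frac{b_2 - b_1}{k_1 - k_2},\ \frac{y_0 - b_2}{k_2} \,\right\}, \qquad y := k_2 x + b_2,
\]
and then check the four conditions $x \geq x_0$, $y \geq y_0$, $k_2 x + b_2 \leq y$, and $y \leq k_1 x + b_1$ directly from the definition of $x$: the first is immediate, the second because $x \geq (y_0-b_2)/k_2$, the third is an equality, and the fourth because $x \geq (b_2-b_1)/(k_1-k_2)$ gives $(k_1-k_2)x \geq b_2-b_1$.

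There is essentially no obstacle here: the only point worth noting is that the three lower bounds on $x$ never conflict, which is exactly what makes the $\max$ construction work, and that the hypotheses $0 < k_2 < k_1$ and $b_1 < 0 < b_2$ enter only through $k_1 - k_2 > 0$, $k_2 > 0$, and $b_2 - b_1 > 0$. In particular the sign conditions on $b_1$ and $b_2$ are used only to guarantee that the nonemptiness threshold $(b_2 - b_1)/(k_1 - k_2)$ is a well-defined finite number, not in any more subtle way.
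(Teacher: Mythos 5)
Your proof is correct and is essentially the same elementary construction as the paper's: both exhibit an explicit point far enough out in the wedge between the two lines, the paper by moving from the intersection point $\bigl(\frac{b_2-b_1}{k_1-k_2},\frac{k_1b_2-k_2b_1}{k_1-k_2}\bigr)$ along a ray of intermediate slope $\frac{k_1+k_2}{2}$ and taking the step large enough, you by sitting on the lower line $y=k_2x+b_2$ and taking $x$ to be a maximum of three explicit lower bounds. Your variant is, if anything, slightly more explicit since it avoids the ``pick $n$ large enough'' step, but there is no substantive difference in the argument.
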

\begin{proof}
    Let $x^* = \frac{b_2-b_1}{k_1-k_2}$, $y^* = \frac{k_1b_2 -k_2b_1}{k_1-k_2}$, and let $x_n = x^* + n$, $y_n = y^* + \frac{k_1+k_2}{2}n$. It is easy to check that $k_2 x_n+b_2 \leq y_n \leq k_1 x_n+b_1$ for all $n>0$. We can pick $n$ large enough such that $x_n\geq x_0$ and $y_n\geq y_0$.
\end{proof} 

\subsection{Proof to Proposition~\ref{lem:Bddg}}
\begin{proof}
By definition $g_t$ reads
\begin{eqnarray}
    \|g_t\|_2 &=&\|\nabla F^{\BB_t}(w_t)\|_2 \nonumber\\
    &\leq& \|\nabla F^{\BB_t}(w_t) - \nabla F^{\BB_t}(w^*)\|_2 +\|\nabla F^{\BB_t}(w^*)\|_2\nonumber\\
    &\leq& L\|w_t-w^*\| + \max_{\BB}\|\nabla F^{\BB}(w^*)\|_2\nonumber\\
    &\leq& 2LD_w + \max_{\BB}\|\nabla F^{\BB}(w^*)\|_2,\nonumber
\end{eqnarray}
where the inequalities hold according to Assumption. The proposition is proved by letting $D_G = 2LD_w + \max_{\BB}\|\nabla F^{\BB}(w^*)\|_2$.
\end{proof}

\subsection{Proof to Proposition~\ref{lem:BddAb}}
\begin{proof}
    Consider 
    \begin{eqnarray}
        D_b &=& \max\left\{ \left\|b_0\right\|_2, \frac{1}{1-S_\alpha D_w^2} \left( D_G(1+S_\alpha D_w^2) + \left\|A_0\right\|_2 D_w\right)\right\}\nonumber\\
        D_A &=& \left\|A_0\right\|_2 + S_\alpha D_w(D_b+D_G)\nonumber.
    \end{eqnarray}
    
    Apparently $\|b_0\|_2 \leq D_b$. We assume that for all $0 \leq t \leq T-1$, $\|b_{t}\|\leq D_b$. Reformulate the updating rule for $A_t$ yields
    \begin{eqnarray}
        A_t &=& A_{t-1} + \alpha_t(g_t - A_{t-1}w_t - b_{t-1})w_t^\top\nonumber\\
        &=& A_{t-1}(I_d-\alpha_t w_t w_t^\top)  + \alpha_t (g_t - b_t)w_t^\top.\nonumber
    \end{eqnarray}
    Then, for all $1\leq t \leq T-1$,
    \begin{eqnarray}
        \left\|A_t\right\|_2 &=& \left\| A_{t-1}(I_d-\alpha_t w_t w_t^\top)  + \alpha_t (g_t - b_t)w_t^\top\right\|_2 \nonumber\\
        &\leq& \left\|I_d - \alpha_t w_t w_t^\top\right\|_2 \left\|A_{t-1}\right\|_2 + \alpha_t \left\|g_t - b_t\right\|_2 \left\| w_t\right\|_2 \nonumber\\
        &\leq& \left\|A_{t-1}\right\|_2 + \alpha_t (D_b+D_G)D_w,\nonumber
    \end{eqnarray}
    where the last inequality holds because $\|g_t\|\leq D_G$. We recursively obtain
    \begin{eqnarray}
        \left\|A_t\right\|_2 &\leq& \left\|A_0\right\|_2 + \sum_{s=1}^t \alpha_s (D_b+D_G)D_w\nonumber\\
        &<& \left\|A_0\right\|_2 + S_\alpha (D_b +D_G)D_w =D_A.\nonumber
    \end{eqnarray}
    
    Furthermore, we have
    \begin{eqnarray}
        \left\|b_T\right\|_2 &=& \left\|(1-\beta_{T}) b_{T-1} + \beta_T(g_T - A_{T-1} w_T )\right\|_2 \nonumber\\
        &\leq& (1-\beta_T) \|b_{T-1}\|_2 + \beta_T (\|g_T\|_2 + \|A_{T-1}\|_2\|w_T\|_2)\nonumber\\
        &\leq& (1-\beta_T) D_b + \beta_T (D_G + D_AD_w)\nonumber\\
        &\leq& D_b.\nonumber
    \end{eqnarray}
    The last inequality holds because
    \begin{eqnarray}
        D_b &=& \max\left\{ \left\|b_0\right\|_2, \frac{1}{1-S_\alpha D_w^2} \left( D_G(1+S_\alpha D_w^2) + \left\|A_0\right\|_2 D_w\right)\right\}\nonumber\\
        &\geq& \frac{1}{1-S_\alpha D_w^2} \left( D_G(1+S_\alpha D_w^2) + \left\|A_0\right\|_2 D_w\right)\nonumber,
    \end{eqnarray}
    which implies that 
    \begin{eqnarray}
        (1-S_\alpha D_w^2)D_b &\geq& D_G(1+S_\alpha D_w^2) +\left\|A_0\right\|_2 D_w,\nonumber
    \end{eqnarray}
    and therefore
    \begin{eqnarray}
        D_b &\geq& D_b S_\alpha D_w^2 + D_G(1+S_\alpha D_w^2) +\left\|A_0\right\|_2 D_w\nonumber\\
        &=&D_G + D_w(\|A_0\|_2 + S_\alpha D_w(D_G+D_b))=D_G+D_wD_A.\nonumber
    \end{eqnarray}
    According to the principle of induction, this finishes the proof.
\end{proof}

\subsection{Proof to Theorem~\ref{thm:linTVR}}
\begin{proof}
We start with bounding the approximation loss of the full gradient. Letting $G_t=\nabla F(w_t)$, the approximation error reads
\begin{eqnarray}
    \widehat G_t - G_t &=& A_{t} w_t + b_t - G_t \nonumber\\
    &=& \left(A_{t-1} +\alpha_t \left( g_t - A_{t-1} w_t -b_{t-1}\right)w_t^{\top} \right) w_t+ b_{t-1} + \beta_t \left(g_t - A_{t-1} w_t - b_{t-1}\right) -G_t\nonumber\\
    &=& A_{t-1} w_t + b_{t-1} + (\alpha_t \|w_t\|_2^2 + \beta_t) \left(g_t -A_{t-1} w_t - b_{t-1}\right) - G_t\nonumber\\
    &=& (1-\delta_t) \left(A_{t-1} w_t + b_{t-1} - G_t\right) + \delta_t \left(g_t - G_t\right),\nonumber
\end{eqnarray}
where $\delta_t = \alpha_t \left\|w_t\right\|_2^2 + \beta_t$. Letting $\delta = \alpha D_w^2 /\mu+\beta$, apparently we have $\delta_t = (\alpha\|w_t\|_2^2 / (t-1+\mu) + \beta)/(t-1+\mu)  < \delta/(t-1+\mu)$. Taking the square norm of the approximation error yields
\begin{eqnarray}
    \left\|\widehat G_t - G_t\right\|_2^2 &=& (1-\delta_t)^2 \left\|A_{t-1} w_t + b_{t-1} - G_t\right\|_2^2 +\delta_t^2 \left\|g_t - G_t\right\|_2^2\nonumber\\
    && + 2(1-\delta_t)\delta_t \left(A_{t-1}w_t + b_{t-1} - G_t\right)^{\top}(g_t- G_t).\nonumber
\end{eqnarray}
Letting $\EE_t[\cdot] = \EE\left[\cdot \left|\BB_{1},\cdots,\BB_{t-1}\right.\right]$, we notice that $A_{t-1}, b_{t-1}, G_t$ and $w_t$ are known given $\BB_{1},\cdots,\BB_{t-1}$. Then, we have

\begin{eqnarray}
    \EE_{t}\left[\left\|\widehat G_t - G_t\right\|_2^2\right] &=& (1-\delta_t)^2 \left\|A_{t-1} w_t + b_{t-1} - G_t\right\|_2^2 +\delta_t^2 \EE_t\left[\left\|g_t - G_t\right\|_2^2\right]\nonumber\\
    && + 2(1-\delta_t)\delta_t \left(A_{t-1}w_t + b_{t-1} - G_t\right)^{\top}\EE_t\left[g_t- G_t\right]\nonumber\\
    &=&(1-\delta_t)^2 \left\|A_{t-1} w_t + b_{t-1} - G_t\right\|_2^2 +\delta_t ^2\EE_t\left[\left\|g_t - G_t\right\|_2^2\right]\nonumber\\
    &\leq& (1-\delta_t)^2 \left\|A_{t-1} w_t + b_{t-1} - G_t\right\|_2^2 + D_V\frac{\delta^2}{(t-1+\mu)^2}\nonumber\\
    &=& (1-\delta_t)^2\underbrace{ \left\|\widehat G_{t-1} - G_{t-1} + A_{t-1}(w_t-w_{t-1}) -\left( G_{t}-G_{t-1}\right)\right\|_2^2}_{T_1} + D_V\frac{\delta^2}{(t-1+\mu)^2},\nonumber
\end{eqnarray}
where the second equality holds because $\EE_t g_t = G_t$.

For three arbitrary vectors $a_1, a_2, a_3$, the Cauchy inequality implies that for any $\lambda > 0$,
\begin{eqnarray}
    \|a_1 + a_2 + a_3\|_2^2 &=&\|a_1\|_2^2 + \|a_2\|_2^2 + \|a_3\|_2^2 + 2a_1^\top a_2+2a_2^\top a_3 + 2 a_3^\top a_1\nonumber\\
    &\leq& \|a_1\|_2^2 + \|a_2\|_2^2 + \|a_3\|_2^2 + \frac{1}{2\lambda}\|a_1\|^2_2 + 2\lambda \| a_2\|_2^2+\frac{1}{2\lambda}\|a_1\|^2_2 + 2\lambda \| a_3\|_2^2+ \|a_2\|_2^2 + \|a_3\|_2^2\nonumber\\
    &=&(1+\frac{1}{\lambda}) \|a_1\|_2^2 + (2+2\lambda)\|a_2\|_2^2 + (2+2\lambda) \|a_3\|_2^2.\nonumber
\end{eqnarray}
Applying this inequality to $T_1$ and replacing $\lambda$ with $\lambda/(t-1+\mu)$, we have
\begin{eqnarray}
    T_1 &\leq& \left(1+\frac{\lambda}{t-1+\mu}\right) \left\|\widehat G_{t-1}-G_{t-1}\right\|_2^2 + 2\left(1+\frac{t-1+\mu}{\lambda}\right) \left(\|A_{t-1}(w_t-w_{t-1})\|_2^2 +\left\|G_{t}-G_{t-1}\right\|_2^2\right)\nonumber\\
    &\leq& \left(1+\frac{\lambda}{t-1+\mu}\right) \left\|\widehat G_{t-1}-G_{t-1}\right\|_2^2 + 2\left(1+\frac{t-1+\mu}{\lambda}\right)\left(D_A^2 + L^2\right)\|w_{t}-w_{t-1}\|_2^2\nonumber\\
    &\leq& \left(1+\frac{\lambda}{t-1+\mu}\right) \left\|\widehat G_{t-1}-G_{t-1}\right\|_2^2 + 2\left(1+\frac{t-1+\mu}{\lambda}\right)\left(D_A^2 + L^2\right)\gamma_{t-1}^2\|\widehat G_{t-1}\|_2^2\nonumber\\
    &\leq& \left(1+\frac{\lambda}{t-1+\mu}\right) \left\|\widehat G_{t-1}-G_{t-1}\right\|_2^2 + 4\left(1+\frac{t-1+\mu}{\lambda}\right)\left(D_A^2 + L^2\right)\gamma_{t-1}^2(\|\widehat G_{t-1} - G_{t-1}\|_2^2 + \| G_{t-1}\|_2^2)\nonumber\\
    &=& \left(1+\frac{\lambda}{t-1+\mu} + 4\left(1+\frac{t-1+\mu}{\lambda}\right)\left(D_A^2 + L^2\right)\gamma_{t-1}^2\right) \left\|\widehat G_{t-1}-G_{t-1}\right\|_2^2 \nonumber\\
    &&+ 4\left(1+\frac{t-1+\mu}{\lambda}\right)\left(D_A^2 + L^2\right)\gamma_{t-1}^2\| G_{t-1}\|_2^2\nonumber,
\end{eqnarray}
where the second inequality applies Proposition~\ref{lem:BddAb} and Lipschitz continuity of $\nabla F$, the third inequality applies Lemma~\ref{lem:BddW} and the forth inequality further applies the Cauchy inequality. We let $\lambda = 2\gamma\sqrt{D_A^2+L^2}$. The upper bound is simplified to
\begin{eqnarray}
    T_1 &\leq& \left(1 + \frac{4\gamma \sqrt{D_A^2+L^2}}{t-1+\mu} + \frac{4\gamma^2(D_A^2+L^2)}{(t-1+\mu)^2}\right)\left\|\widehat G_{t-1} - G_{t-1}\right\|_2^2 + 4\left(1+\frac{t-1+\mu}{2\gamma\sqrt{D_A^2+L^2}}\right)\frac{\gamma^2(D_A^2+L^2)}{(t-1+\mu)^2}\|G_{t-1}\|_2^2\nonumber\\
    &=&\left(1 + \frac{2\gamma \sqrt{D_A^2+L^2}}{t-1+\mu}\right)^2\left\|\widehat G_{t-1} - G_{t-1}\right\|_2^2 + 4\left(1+\frac{t-1+\mu}{2\gamma\sqrt{D_A^2+L^2}}\right)\frac{\gamma^2(D_A^2+L^2)}{(t-1+\mu)^2}\|G_{t-1}\|_2^2\label{eqn:T1}\\
    &\leq&\left(1 + \frac{2\gamma \sqrt{D_A^2+L^2}}{t-1+\mu}\right)^2\left\|\widehat G_{t-1} - G_{t-1}\right\|_2^2 + 4\left(1+\frac{t-1+\mu}{2\gamma\sqrt{D_A^2+L^2}}\right)\frac{\gamma^2L^2(D_A^2+L^2)}{(t-1+\mu)^2}\|w_{t-1} - w^*\|_2^2\nonumber\\
    &\leq&\left(1 + \frac{2\gamma \sqrt{D_A^2+L^2}}{t-1+\mu}\right)^2\left\|\widehat G_{t-1} - G_{t-1}\right\|_2^2 + \frac{4\gamma L^2\sqrt{D_A^2 +L^2}}{t-1+\mu}\|w_{t-1} - w^*\|_2^2,\nonumber
\end{eqnarray}
where the second inequality further applies the Lipschitz continuity of $\nabla F$. The last inequality holds because $\mu$ is large enough such that 
$$\frac{1}{t-1+\mu} \leq \frac{1}{\beta}<\frac{1}{2\gamma \sqrt{D_A^2 + L^2}},$$
and thus
$$\frac{1}{t-1+\mu} \left(1 + \frac{t-1+\mu}{2\gamma\sqrt{D_A^2+L^2}}\right) < \frac{1}{\gamma\sqrt{D_A^2+L^2}}.$$

The expectation $\EE_t[\|\widehat G_t - G_t\|_2^2]$ is upper bounded by
\begin{eqnarray}
    \EE_t\left[\left\|\widehat G_t - G_t\right\|_2^2\right] &\leq& \left(1 - \delta_t\right)^2 \left(1+\frac{2\gamma\sqrt{D_A^2+L^2}}{t-1+\mu}\right)^2\left\|\widehat G_{t-1} - G_{t-1}\right\|_2^2+\frac{C_1\gamma}{t-1+\mu}\left\|w_{t-1}-w^*\right\|_2^2 + \frac{D_V\delta^2}{(t-1+\mu)^2},\nonumber
\end{eqnarray}
where $C_1 = 4L^2\sqrt{D_A^2+L^2}$. The setting of parameters satisfies that $\beta_t < \delta_t < 1$ and therefore 
\begin{eqnarray}
    (1-\delta_t)^2 \left(1+ \frac{2\gamma\sqrt{D_A^2+L^2}}{t-1+\mu}\right)^2 &<& \left(1-\frac{\beta}{t-1+\mu}\right)^2\left(1+ \frac{2\gamma\sqrt{D_A^2+L^2}}{t-1+\mu}\right)^2\nonumber\\
    &<&\left( 1 - \frac{\beta-2\gamma\sqrt{D_A^2+L^2}}{t-1+\mu}\right)^2 <1 - \frac{\beta-2\gamma\sqrt{D_A^2+L^2}}{t-1+\mu}:=1-\frac{\beta^*}{t-1+\mu}\nonumber,
\end{eqnarray}
where $\beta^*:=\beta - 2\gamma\sqrt{D_A^2+L^2}>0$. We obtain
\begin{eqnarray}
     \EE_t\left[\left\|\widehat G_t - G_t\right\|_2^2\right] &\leq& \left(1-\frac{\beta^*}{t-1+\mu}\right) \left\|\widehat G_{t-1} - G_{t-1}\right\|_2^2 + \frac{C_1\gamma}{t-1+\mu} \left\|w_{t-1}-w^*\right\|_2^2 + \frac{D_V\delta}{(t-1+\mu)^2}.\nonumber
\end{eqnarray}
Taking full expectation on both side yields
\begin{eqnarray}
\label{Eq:GError}
     \EE\left[\left\|\widehat G_t - G_t\right\|_2^2\right] &\leq& \left(1-\frac{\beta^*}{t-1+\mu}\right) \EE\left [\left\|\widehat G_{t-1} - G_{t-1}\right\|_2^2 \right]+ \frac{C_1\gamma}{t-1+\mu}\EE\left[ \left\|w_{t-1}-w^*\right\|_2^2\right] + \frac{D_V\delta}{(t-1+\mu)^2}.
\end{eqnarray}

Next, we consider the distance between the iterate and the optimal point. We obtain
\begin{eqnarray}
    \left\|w_{t} - w^*\right\|_2^2 &\leq& \left\| w_{t-1} - \gamma_{t-1} \widehat G_{t-1} - w^* \right\| _2^2\nonumber\\
    &=& \left\|w_{t-1} - w^*\right\|_2^2 + \gamma_{t-1} ^2\left\|\widehat G_{t-1}\right\|_2^2 -2\gamma_{t-1} \widehat G_{t-1}^\top (w_{t-1}-w^*)\nonumber\\
    &\leq&\left\|w_{t-1} - w^*\right\|_2^2 + \gamma_{t-1} ^2 D_G^2 -2\gamma_{t-1} (\widehat G_{t-1} - G_{t-1}) ^{\top}(w_{t-1}-w^*) -2\gamma_{t-1}G_{t-1}^\top (w_{t-1}-w^*)\nonumber\\
    &\leq& (1-2c\gamma_{t-1})\left\|w_{t-1}-w^*\right\|_2^2 -2\gamma_{t-1} (\widehat G_{t-1}-G_{t-1})^\top (w_{t-1}-w^*) +\gamma_{t-1}^2 D_G^2\nonumber\\
    &\leq& (1-2c\gamma_{t-1})\left\|w_{t-1}-w^*\right\|_2^2 +\gamma_{t-1} c\left\|w_{t-1}-w^*\right\|_2^2 + \frac{\gamma_{t-1}}{c}\left\|\widehat G_{t-1} -G_{t-1}\right\|_2^2 + \gamma_{t-1} ^2 D_G^2\nonumber\\
    &=& \left(1-\frac{c\gamma}{t-1+\mu}\right) \left\|w_{t-1}-w^*\right\|_2^2 +\frac{\gamma}{c(t-1+\mu)} \left\|\widehat G_{t-1} - G_{t-1}\right\|_2^2 + \frac{D_G^2\gamma^2}{(t-1+\mu)^2},\nonumber
\end{eqnarray}
where the third inequality holds because of strong convexity of $F(w)$. Taking expectation we have
\begin{eqnarray}
    \label{Eq:wError}
    \EE\left[\left\|w_{t}-w^*\right\|_2^2\right] &\leq& \left(1-\frac{\gamma c}{t-1+\mu}\right) \EE\left[ \left\|w_{t-1}-w^*\right\|_2^2\right] + \frac{\gamma}{c(t-1+\mu)} \EE\left[\left\|\widehat G_{t-1} - G_{t-1}\right\|_2^2\right] \nonumber\\
    &&+ \frac{D_G^2\gamma^2}{(t-1+\mu)^2}.
\end{eqnarray}

The setting of hyperparameters guarantees that $\frac{\beta^* - 1}{C_1\gamma}> \frac{\gamma}{c(\gamma c - 1)} > 0$. Applying Lemma~\ref{lem:SolutiontoIneq}, there exist $M_1 \geq (\mu+1) \left\|w_1 - w^*\right\|_2^2$ and $M_2 \geq (\mu+1) \EE\left[\left\|\widehat G_1 - G_1\right\|_2^2\right]$, such that
\begin{eqnarray}
\label{Eq:M1M2}
    \frac{\gamma}{c(\gamma c -1)}M_2 + \frac{D_G^2\gamma^2}{\gamma c - 1} \leq M_1 \leq \frac{\beta^*-1}{C_1\gamma} M_2 - \frac{D_V\delta^2}{C_1\gamma}.
\end{eqnarray}
Apparently,
\begin{eqnarray}
    \EE\left[\left\|w_1-w^*\right\|_2^2\right]& \leq& \frac{M_1}{1+\mu},\nonumber\\
    \EE\left[ \left\|\widehat G_1 - G_1\right\|_2^2\right] &\leq& \frac{M_2}{1+\mu}.\nonumber
\end{eqnarray}
We use induction. Assuming
\begin{eqnarray}
    \EE\left[\left\|w_{t-1}-w^*\right\|_2^2\right]& \leq& \frac{M_1}{t-1+\mu},\nonumber\\
    \EE\left[ \left\|\widehat G_{t-1} - G_{t-1}\right\|_2^2\right] &\leq& \frac{M_2}{t-1+\mu},\nonumber
\end{eqnarray}
and according to (\ref{Eq:GError}) and (\ref{Eq:wError}), we obtain
\begin{eqnarray}
    \EE\left[\left\| \widehat G_t - G_t\right\|_2^2\right] &\leq& \left(1-\frac{\beta^*}{t-1+\mu}\right) \frac{M_2}{t-1+\mu} + \frac{C_1\gamma}{(t-1+\mu)^2}M_1 + \frac{D_V\delta^2}{(t-1+\mu)^2}\nonumber\\
    &=& \frac{t-2+\mu}{(t-1+\mu)^2}M_2 -\frac{1}{(t-1+\mu)^2}\left((\beta^* - 1)M_2 -C_1\gamma M_1 - D_V\delta^2 \right)\nonumber\\
    &\leq& \frac{M_2}{t+\mu} ,\nonumber
\end{eqnarray}
where the last inequality holds because $(t+\mu)(t-2+\mu)\leq(t-1+\mu)^2$ and the second term is guaranteed by (\ref{Eq:M1M2}) to be negative. We also have
\begin{eqnarray}
\EE\left[\left\|w_t-w^*\right\|_2^2\right] &\leq &\left(1-\frac{\gamma c}{t-1+\mu}\right) \frac{M_1}{t-1+\mu} + \frac{\gamma}{c(t-1+\mu)^2}M_2 + \frac{D_G^2\gamma^2}{(t-1+\mu)^2}\nonumber\\
&=& \frac{t-2+\mu}{(t-1+\mu)^2}M_1 - \frac{1}{(t-1+\mu)^2} \left((\gamma c -1)M_1-\frac{\gamma}{c}M_2 - D_G^2\gamma^2 \right)\nonumber\\
&\leq& \frac{M_1}{t+\mu}.\nonumber
\end{eqnarray}
This completes the proof.
\end{proof}

\subsection{Proof to Theorem~\ref{thm:linTVR_noncvx}}
\begin{proof}
Equation (\ref{eqn:T1}) also holds for non-convex loss functions. Similar to the proof to Theorem~\ref{thm:linTVR}, we further obtain
\begin{eqnarray}
\label{eqn:Diffiter}
    \EE\left[\left\|\widehat G_t - G_t\right\|_2^2\right] &\leq& \left(1-\frac{\beta^*}{t-1+\mu}\right) \EE\left [\left\|\widehat G_{t-1} - G_{t-1}\right\|_2^2 \right]+ \frac{C_2\gamma}{t-1+\mu}\EE\left[ \left\|G_{t-1}\right\|_2^2\right] + \frac{D_V\delta}{(t-1+\mu)^2},
\end{eqnarray}
where $\beta^* = \beta - 2\gamma \sqrt{D_A^2+L^2}$ and $C_2 = 4\sqrt{D_A^2+L^2}$. We apply the Lipschitz continuity of $\nabla F$ to obtain
\begin{eqnarray}
    F(w_{t}) - F(w_{t-1}) &\leq& -\gamma_{t-1} G_{t-1}^\top \widehat G_{t-1} + \frac{L\gamma_{t-1}^2\left\|\widehat G_{t-1}\right\|_2^2}{2}\nonumber\\
    &=& \frac{\gamma_{t-1}}{2} \|G_{t-1} - \widehat G_{t-1}\|_2^2 - \frac{\gamma_{t-1}}{2}\left\|G_{t-1}\right\|_2^2 +\frac{\gamma_{t-1}(L\gamma_{t-1} - 1)}{2}\|\widehat G_{t-1}\|_2^2 \nonumber\\
    &\leq& \frac{\gamma_{t-1}}{2} \|G_{t-1} - \widehat G_{t-1}\|_2^2 - \frac{\gamma_{t-1}}{2}\left\|G_{t-1}\right\|_2^2,\nonumber
\end{eqnarray}
where the last inequality holds because $\gamma_tL<1$. This implies
\begin{eqnarray}
\label{eqn:Giter}
    \frac{\gamma}{t-1+\mu}\EE\left[\left\|G_{t-1}\right\|_2^2\right] \leq \frac{\gamma}{t-1+\mu}\EE\left[\left\|\widehat G_{t-1} - G_{t-1} \right\|_2^2\right] + 2\EE\left[F(w_{t-1})\right] - 2\EE\left[F(w_{t})\right].
\end{eqnarray}
The summation over $t$ of (\ref{eqn:Diffiter}) reads
\begin{eqnarray}
    \sum_{t=2}^T \EE\left[\left\|\widehat G_t - G_t\right\|_2^2\right]&\leq& \sum_{t=2}^{T} \left(1-\frac{\beta^*}{t-1+\mu}\right) \EE\left[\left\|\widehat G_{t-1} - G_{t-1}\right\|_2^2\right] + C_2\sum_{t=2}^{T}\frac{\gamma}{t-1+\mu} \EE\left[\left\|G_{t-1}\right\|_2^2\right] + \sum_{t=2}^{T}\frac{D_V\delta}{(t-1+\mu)^2}\nonumber\\
    &=& \sum_{t=1}^{T-1} \left(1-\frac{\beta^*}{t+\mu}\right) \EE\left[\left\|\widehat G_t - G_t\right\|_2^2\right] + C_2\sum_{t=1}^{T-1}\frac{\gamma}{t+\mu} \EE\left[\left\|G_{t}\right\|_2^2\right] + \sum_{t=1}^{T-1}\frac{D_V\delta}{(t+\mu)^2}\nonumber\\
    &\leq& \sum_{t=1}^{T-1} \left(1-\frac{\beta^*}{t+\mu}\right) \EE\left[\left\|\widehat G_t - G_t\right\|_2^2\right] + C_2\sum_{t=1}^{T-1}\frac{\gamma}{t+\mu} \EE\left[\left\|\widehat G_t -G_{t}\right\|_2^2\right] \nonumber\\
    &&+ 2(\EE[F(w_1)] -\EE [F(w_T)]) + \sum_{t=1}^{T-1}\frac{D_V\delta}{(t+\mu)^2}.\nonumber
\end{eqnarray}
Given that $\beta > 6\gamma \sqrt{D_A^2+L^2}$, this implies
\begin{eqnarray}
    \sum_{t=1}^{T-1} \frac{\beta^*-\gamma C_2}{t+\mu} \EE\left[\left\|\widehat G_t - G_t\right\|_2^2\right] &\leq& \left(\sum_{t=1}^{T-1}\EE\left[\left\|\widehat G_t - G_t\right\|_2^2\right]-\sum_{t=2}^T\EE\left[\left\|\widehat G_t - G_t\right\|_2^2\right]\right) + 2(\EE[F(w_{1})] -\EE[F(w_T)]) + \sum_{t=1}^{T-1} \frac{D_V\delta}{(t+\mu)^2}\nonumber\\
    &=& \EE\left[\left\|\widehat G_1-G_1\right\|_2^2\right] - \EE\left[\left\|\widehat G_T-G_T\right\|_2^2\right] + 2(\EE[F(w_{1})] -\EE[F(w_T)]) + \sum_{t=1}^{T-1} \frac{D_V\delta}{(t+\mu)^2}\leq M_3,\nonumber
\end{eqnarray}
for some $M_3>0$ independent on $T$. Furthermore
\begin{eqnarray}
    \sum_{t=1}^{T-1} \frac{\gamma}{t+\mu} \EE\left[\left\|G_t\right\|_2^2\right] &\leq&\sum_{t=1}^{T-1} \frac{\gamma}{t+\mu} \EE\left[\left\|\widehat G_t - G_t\right\|_2^2\right] + 2(\EE[F(w_1)]-\EE[F(w_T)])\nonumber\\
    &\leq& \frac{\gamma}{\beta^*-C_2\gamma} M_3 +2(\EE[F(w_1)]-\EE[F(w_T)]) := M_4.\nonumber
\end{eqnarray}
We apply the lower bound
\begin{eqnarray}
    \sum_{t=1}^{T}\frac{1}{t+\mu} \geq \int_{1}^{T+1} \frac{1}{t+\mu}dt = \log\left(\frac{T+1+\mu}{1+\mu}\right),\nonumber
\end{eqnarray}
to obtain
\begin{eqnarray}
    \gamma\log\left(\frac{T+1+\mu}{1+\mu}\right) \min_{1\leq t \leq T} \EE\left[\|\widehat G_t - G_t\|_2^2\right] &\leq& \min_{1\leq t \leq T}\EE\left[\|\widehat G_t - G_t\|_2^2\right] \sum_{t=1}^T \gamma_t\nonumber\\
    &\leq& \sum_{t=1}^T \gamma_t \EE\left[\|\widehat G_t - G_t\|_2^2\right] \leq M_3,\nonumber
\end{eqnarray}
and similarly,
\begin{eqnarray}
     \gamma\log\left(\frac{T+1+\mu}{1+\mu}\right)\min_{1\leq t \leq T} \EE\left[\|\widehat G_t\|_2^2\right] &\leq& M_4.\nonumber
\end{eqnarray}
Therefore,
\begin{eqnarray}
    \min_{1\leq t \leq T} \EE\left[\|\widehat G_t - G_t\|_2^2\right] \leq \OO\left(\frac{1}{\log T}\right)\nonumber\\
    \min_{1\leq t \leq T} \EE\left[\|G_t\|_2^2\right] \leq \OO\left(\frac{1}{\log T}\right).\nonumber
\end{eqnarray}
\end{proof}

\color{black}
\section{Experimental Details}
\subsection{Values of $\lambda$}

In the experiments for strongly convex losses, the $l_2$ regularization parameters $\lambda$ are decided as described in Section~\ref{sec:convex_experiments}. In Table~\ref{tab:lambdas}, $\lambda^*$ denotes the optimal value of $\lambda$ and $\lambda_1$ to $\lambda_5$ denote the sampled values of $\lambda$ when $\lambda^*=0$.

\begin{table}[h]
\centering
\begin{tabular}{r|rrrrr}
Dataset   & SmallNorb & CovType  & News20 & KDD   & MNIST \\ \hline
$\lambda^*$ & 0.001 & $5\cdot 10^{-4}$& - & - & -\\
$\lambda_1$ &-&-    & 0.460   & 0.970  & 1.600   \\
$\lambda_2$ &-&-      & 1.100    & 0.023 & 1.400   \\
$\lambda_3$ &-&-      & 0.034  & 0.210  & 0.034 \\
$\lambda_4$ &-&-      & 0.078  & 0.088 & 0.680  \\
$\lambda_5$ &-&-      & 0.212  & 0.390  & 0.460 
\end{tabular}
\caption{$l_2$ regularization parameters}
\label{tab:lambdas}
\end{table}

\subsection{Network Architectures}

Next, we describe the FFNs used in the non-convex experiments. Each FFN has two fully connected layers with the dimensions shown in Table~\ref{tab:FFNStructure}. The underlying activation function is ReLU.

\begin{table}[H]
    \centering
    \begin{tabular}{c|c|c|c}
    \hline
      Dataset   & Input dimension & Hidden dimension & Output dimension\\\hline
        CovType & 98 & 10 & 7\\ 
        KDD & 116 & 10 & 2\\ \hline
    \end{tabular}
    \caption{Feedforward network structure}
    \label{tab:FFNStructure}
\end{table}

\subsection{Additional Results}

Additional experimental results are displayed next. Table~\ref{tab:strCvxwithMom} extends the results in Table~\ref{tab:strcvxresult} by adding the column of Momentum. Figures~\ref{fig:cvxAdd} and \ref{fig:noncvxAdd} display the training loss curves of additional convex and non-convex experiments. Specifically, the tasks are training logistic models on SmallNorb, News20 and FFNs on CovType and KDD. The y-axes show the min-max standardized training loss values. Figure~\ref{fig:cvx_errbar} shows the standard deviation of training loss over multiple runs for convex experiments. The standard deviation for KDD is not included as the difference among runs is minor. The y-axes show the min-max standardized standard deviation values.

\begin{table}[h]
\centering
\begin{tabular}{r|r|r|r|r|r}
Dataset                 & $\lambda$   & Momentum & Full TO                & Diagonal TO          & RankOne TO          \\ \hline
SmallNorb               & $\lambda^*$ & $-2.3\;(-)$      & N/A                    & $\mathbf{18.0}\;(+)$ & $13.0\;(+)$         \\ \hline
CovType                 & $\lambda^*$ & $-1.3 \;(\sim)$ & $\mathbf{3.3}\;(\sim)$ & $-2.0\;(\sim)$       & $-2.1\;(\sim)$      \\ \hline
\multirow{5}{*}{KDD}    & $\lambda_1$ & $-85.9\;(-)$      & $15.0\;(+)$            & $\mathbf{16.0}\;(+)$ & $14.0\;(+)$         \\
                        & $\lambda_2$ & $-2.6 \;(\sim)$      & $6.0\;(+)$             & $\mathbf{5.8}\;(+)$  & $-3.0\;(\sim)$      \\
                        & $\lambda_3$ & $-21.2\;(-)$      & $10.0\;(+)$            & ${10.0\;(+)}$  & $\mathbf{10.2}\;(+)$         \\
                        & $\lambda_4$ & $-11.8 \;(-)$    & $\textbf{9.0}\;(+)$    & $7.6\;(+)$           & $-6.0\;(\sim)$      \\
                        & $\lambda_5$ & $-40.2\;(-)$      & $11.0\;(+)$            & $\mathbf{11.2}\;(+)$ & $8.7\;(+)$          \\ \hline
\multirow{5}{*}{News20} & $\lambda_1$ & $-1.4 \;(-)$      & N/A                    & $\mathbf{0.4}\;(+)$  & $0.4\;(+)$          \\
                        & $\lambda_2$ & $0.2 \;(+)$     & N/A                    & $\mathbf{0.2}\;(+)$  & $0.2\;(+)$          \\
                        & $\lambda_3$ & $-1.0\;(\sim)$      & N/A                    & $\mathbf{4.4}\;(+)$  & $4.3\;(+)$          \\
                        & $\lambda_4$ & $2.2\;(+)$      & N/A                    & $\mathbf{4.8}\;(+)$  & $4.8\;(+)$          \\
                        & $\lambda_5$ & $2.0\;(\sim)$      & N/A                    & $\mathbf{16.8}\;(+)$ & $16.7\;(+)$         \\ \hline
\multirow{5}{*}{MNIST}  & $\lambda_1$ & $-9.1\;(-)$      & $\mathbf{4.2}\;(+)$    & $3.9\;(+)$           & $-1.1\;(-)$         \\
                        & $\lambda_2$ & $-12.1\;(-)$      & $\mathbf{4.4}\;(+)$    & $3.9\;(+)$           & $-1.0\;(-)$         \\
                        & $\lambda_3$ & $-13.9\;(-)$      & $-7.1\;(-)$            & $\mathbf{0.7}\;(+)$  & $\mathbf{0.7}\;(+)$ \\
                        & $\lambda_4$ & $\mathbf{12.7}\;(+)$      & ${0.3}\;(+)$    & $0.2\;(+)$           & $0.2\;(+)$          \\
                        & $\lambda_5$ & $\mathbf{3.1}\;(+)$      & ${0.7}\;(+)$    & $0.6\;(+)$           & $-3.4\;(-)$        
\end{tabular}
\caption{Values of $\rho$ for strongly convex tasks (all values $\times 10^{-3}$). Full TO denotes for full version of pseudo-linear TO in Algorithm~\ref{alg:linearTVR}. `+', `$\sim$' and `-' denote for $s< 0.05$, $0.05\leq s\leq 0.95$, and $s>0.95$, respectively.}
\label{tab:strCvxwithMom}
\end{table}

\begin{figure}[h]
    \centering
    \begin{subfigure}[b]{0.49\linewidth}
        \includegraphics[width=\linewidth]{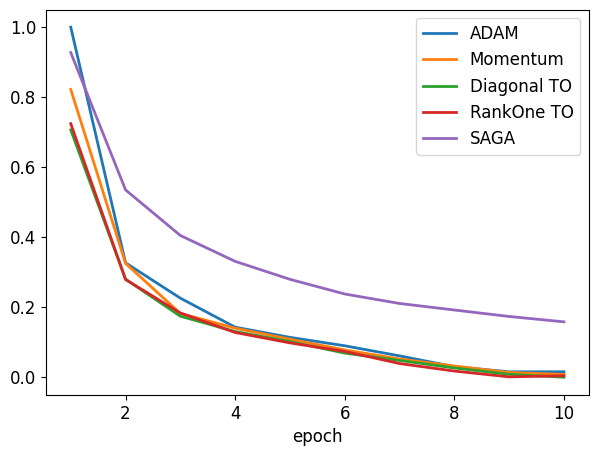}
        \caption{SmallNorb}
    \end{subfigure}
    \begin{subfigure}[b]{0.49\linewidth}
        \includegraphics[width=\linewidth]{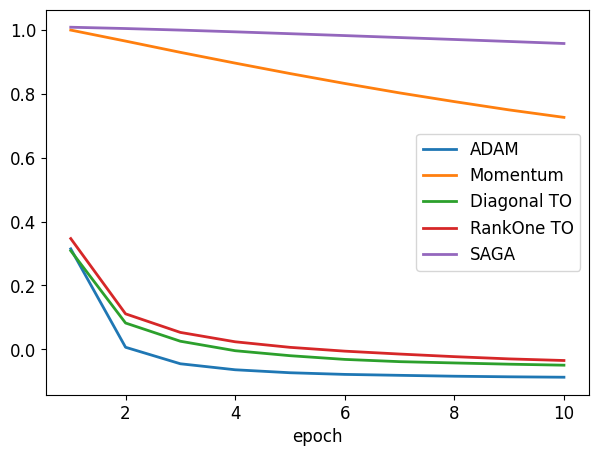}
        \caption{News20}
    \end{subfigure}
    \caption{Training loss curves for logistic regression tasks with $\lambda=0$. Standardization is based on minimum=0.96 and maximum=1.31 in the left figure, and minimum=2.18 and maximum=2.99 in the right figure.}
    \label{fig:cvxAdd}
\end{figure}

\begin{figure}
    \centering
    \begin{subfigure}[b]{0.49\linewidth}
        \includegraphics[width=\linewidth]{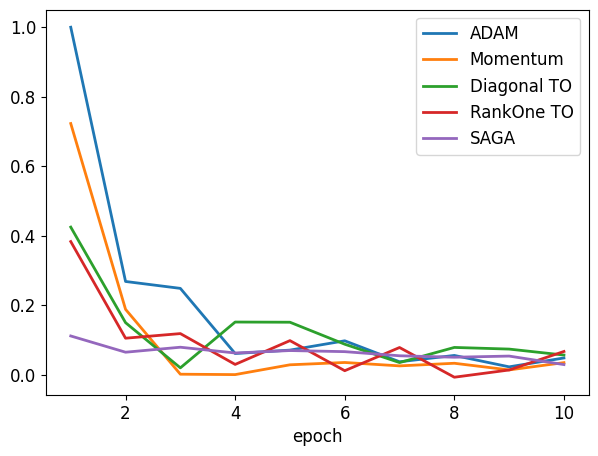}
        \caption{SmallNorb}
    \end{subfigure}
    \begin{subfigure}[b]{0.49\linewidth}
        \includegraphics[width=\linewidth]{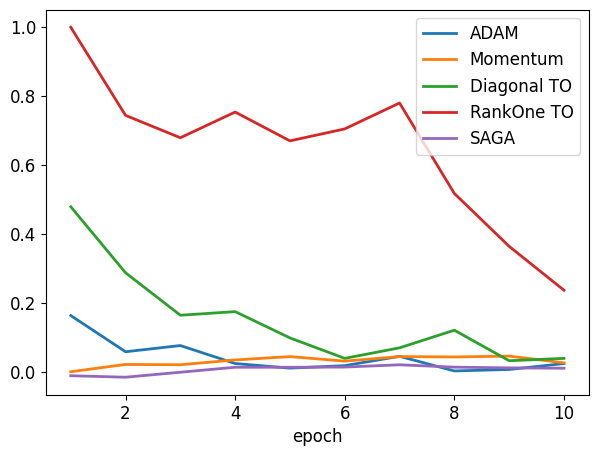}
        \caption{News20}
    \end{subfigure}
    
    \begin{subfigure}[b]{0.49\linewidth}
        \includegraphics[width=\linewidth]{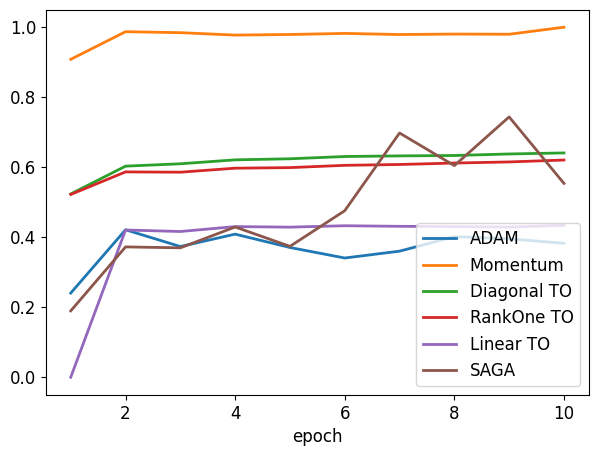}
        \caption{CovType}
    \end{subfigure}
    \begin{subfigure}[b]{0.49\linewidth}
        \includegraphics[width=\linewidth]{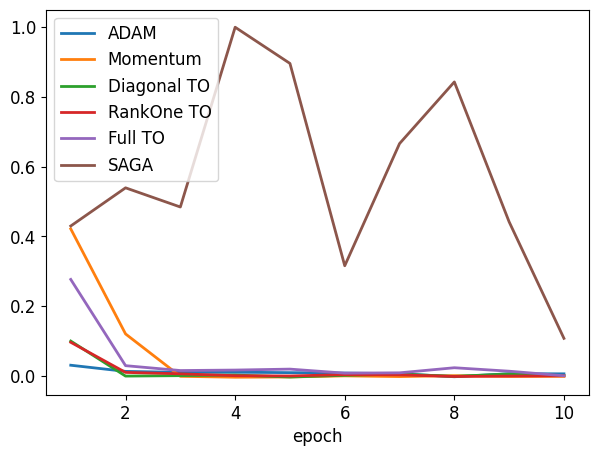}
        \caption{MNIST}
    \end{subfigure}
    \caption{Standard deviation of training loss for logistic regression tasks with $\lambda=0$. Standardization is based on minimum=$1.31\times10^{-3}$ and maximum=$4.86\times 10^{-2}$ in Figure (a), minimum=$2.78\times 10^{-4}$ and maximum=$1.36\times 10^{-2}$ in Figure (b), minimum=$1.00\times 10^{-2}$ and maximum=$2.11\times 10^{-2}$ in Figure (c), and minimum=$1.48\times 10^{-4}$ and maximum=$6.48\times 10^{-3}$ in Figure (d).}
    \label{fig:cvx_errbar}
\end{figure}

\begin{figure}
    \centering
    \begin{subfigure}[b]{0.49\linewidth}
        \includegraphics[width=\linewidth]{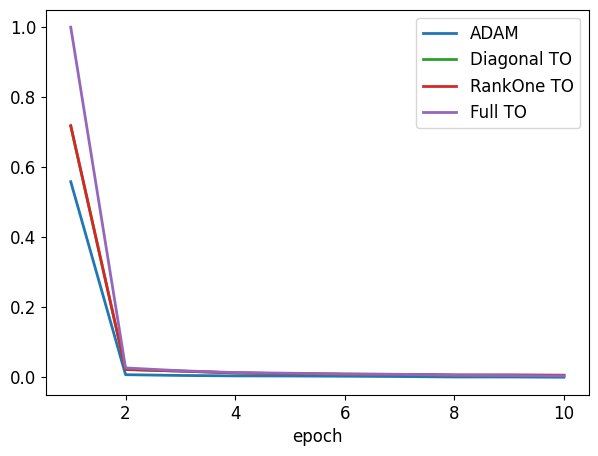}
        \caption{FFN-CovType}
    \end{subfigure}
    \begin{subfigure}[b]{0.49\linewidth}
        \includegraphics[width=\linewidth]{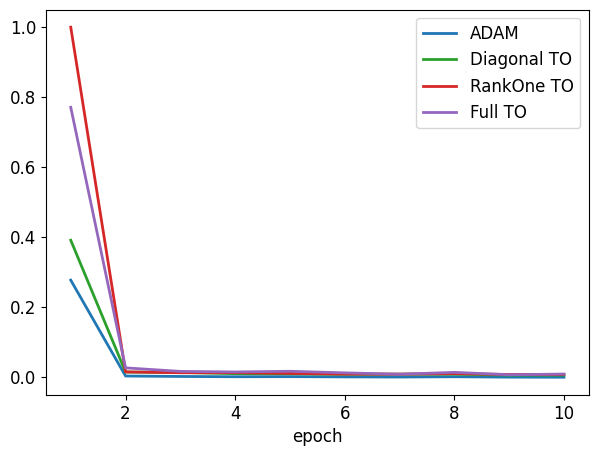}
        \caption{FFN-KDD}
    \end{subfigure}
    \caption{Training loss curves for non-convex tasks. Standardization is based on minimum=1.48 and maximum=1.53 in the left figure, and minimum=0.32 and maximum=0.35 in the right figure. }
    \label{fig:noncvxAdd}
\end{figure}

\end{document}